\title{Constrained Pure Exploration Multi-Armed Bandits with a Fixed Budget}
\author{
    %Authors
    % All authors must be in the same font size and format.
    Fathima Zarin Faizal, Jayakrishnan Nair
}
\title{My Publication Title --- Single Author}
\author {
    Author Name
}
\title{My Publication Title --- Multiple Authors}
\author {
    % Authors
    First Author Name,\textsuperscript{\rm 1}
    Second Author Name, \textsuperscript{\rm 2}
    Third Author Name \textsuperscript{\rm 1}
}
\newcommand{\ra}{\rightarrow}
\newcommand{\prob}[1]{\mathbb{P}\left(#1\right)}
\newcommand{\R}{\mathbb{R}}
\newcommand{\ignore}[1]{}
\newcommand{\C}{\mathcal{C}}
\newcommand{\G}{\mathcal{G}}
\newcommand{\JN}[1]{\ifthenelse{\boolean{showcomments}} {\textcolor{red}{(JN says: #1)}} {} }
\newcommand{\FZF}[1]{\ifthenelse{\boolean{showcomments}} {\textcolor{blue}{(Fathima says: #1)}} {} }
\pgfplotsset{compat=newest}
\newtheorem{theorem}{Theorem}
\newtheorem{conjecture}{Conjecture}
\newtheorem{lemma}[theorem]{Lemma}
\newtheorem{definition}{Definition}[section]
\DeclareMathOperator*{\argmax}{arg\,max}
\DeclareMathOperator*{\argmin}{arg\,min}
\newcommand\numberthis{\addtocounter{equation}{1}\tag{\theequation}}
\newcommand{\algo}{\textsc{Constrained-SR}}
\begin{document}

\maketitle

\begin{abstract}
  We consider a constrained, pure exploration, stochastic multi-armed
  bandit formulation under a fixed budget. Each arm is associated with
  an unknown, possibly multi-dimensional distribution and is described
  by multiple attributes that are a function of this distribution. The
  aim is to optimize a particular attribute subject to user-defined
  constraints on the other attributes. This framework models
  applications such as financial portfolio optimization, where it is
  natural to perform risk-constrained maximization of mean return. We
  assume that the attributes can be estimated using samples from the
  arms' distributions and that these estimators satisfy suitable
  concentration inequalities. We propose an algorithm called
  \textsc{Constrained-SR} based on the Successive Rejects framework,
  which recommends an optimal arm and flags the instance as being
  feasible or infeasible. A key feature of this algorithm is that it
  is designed on the basis of an information theoretic lower bound for
  two-armed instances. We characterize an instance-dependent upper
  bound on the probability of error under \textsc{Constrained-SR},
  that decays exponentially with respect to the budget. We further
  show that the associated decay rate is nearly optimal relative to an
  information theoretic lower bound in certain special cases. 
\end{abstract}

\section{Introduction}
\label{sec: intro}

The aim of the pure exploration, stochastic, multi-armed bandit (MAB)
problem is to identify, via exploration, the optimal arm among a given
basket of arms. Here, each arm is associated with an a priori unknown
probability distribution, and the optimal arm is classically defined
as one that optimizes a certain attribute associated with its
distribution (for example, the mean). However, in practical
applications, there is rarely just a single arm attribute that is of
interest. For example, in clinical trials, one might be interested in
not just the the efficacy of a treatment protocol, but also its cost
and the severity of its side effects. In portfolio optimization, one
is interested in not just the expected return of a candidate
portfolio, but also the associated variability/risk.

The classical approach in the MAB literature for handling multiple
constraints is to combine them into a single objective, often via a
linear combination \citep{vakili2016,kagrecha2019}. For example, in
portfolio optimization, the optimization of a linear combination of
expected return and its variance is often recommended
\citep{sani2012}. However, the main drawback of this approach is that
there is typically no sound way of determining the weights for this
linear combination. After all, can one equate the `value' of a unit
decrease in expected return of a portfolio to the `value' of a unit
decrease in the return variance in a scale-free manner?  Given that
the mean-variance landscape across the arms is a priori unknown, a
certain choice of arm objective might result in the `optimal' arm
having either an unacceptably low expected return, or an unacceptably
high variability.

An alternative approach for handling multiple arm attributes is to
pose the choice of optimal arm as a constrained optimization
problem. Specifically, the optimal arm is defined as the one that
optimizes a certain attribute, subject to constraints on other
attributes of interest. This avoids the `apples to oranges'
translation required in order to combine multiple attributes into a
single objective. Returning to our portfolio optimization example,
this approach would (potentially) define the optimal arm/portfolio as
the one that optimizes expected return subject to a prescribed risk
appetitite.

In this paper, we analyse such a constrained stochastic MAB
formulation, in the fixed budget pure exploration
setting. Specifically, each arm is associated with a (potentially
multi-dimensional) probability distribution. We consider two
attributes, both of which are functions of the arm distribution. The
optimal arm is then defined as one that minimizes one attribute
(henceforth referred to as the \emph{objective attribute}), subject to
a prescribed constraint on the other attribute (henceforth referred to
as the \emph{constraint attribute}).\footnote{We consider only a
single constraint attribute in this paper. The generalization to
multiple constraint attributes is straightforward, but cumbersome.}
Crucially, we make no limiting assumptions on the class of arm
distributions, or on the specific attributes considered. Instead, we
simply assume that the arm attributes can be estimated from samples
obtained from arm pulls, with reasonable concentration guarantees
(details in Section~\ref{sec: prob}).

While the unconstrained (single attribute) pure exploration MAB
formulation is well studied in the fixed budget setting, the
algorithms and lower bounds for this case do not generalize easily to
the constrained formulation described above. For example, the best
known algorithms for the unconstrained case divide the exploration
budget into phases, and eliminate/reject one or more arms at the end
of each phase (for example, the \emph{Successive Rejects} algorithm
by~\cite{audibert-bubeck}, and the \emph{Sequential Halving} algorithm
by \cite{Karnin2013}). The last surviving arm is then flagged as
optimal. While the exact `rejection schedule' differs across state of
the art algorithms, the decision on which arm(s) to reject is itself
straightforward, given a single, scalar arm attribute.

However, in the presence of multiple constraints, the decision on
which arm(s) to reject is non-trivial, given that estimates of both
attributes of each surviving arm must be taken into consideration. A
naive strategy is to focus on first rejecting the arms that appear
`infeasible' (i.e., arms whose constraint attribute estimates violate
the prescribed threshold), and then reject those arms that appear
`feasible but suboptimal.' However, this strategy can be far from
optimal (see Section~\ref{sec: numerics}). Instead, the approach we
propose exploits an information theoretic lower bound for two-armed
instances. Specifically, this lower bound motivates the definition of
certain suboptimality gaps between pairs of arms. We then reject arms
sequentially based on empirical estimates of these pairwise gaps,
along with a specific tie-breaking rule. This novel approach, which we
formalize as the \algo\ algorithm, is the main contribution of this
paper.

This paper is organized as follows. After a brief survey of the
related literature, we formally describe the constrained pure
exploration MAB formulation in Section~\ref{sec: prob}. In
Section~\ref{sec: lb}, we derive an information theoretic lower bound
for two-armed instances, which leads to a conjecture on the lower
bound for the general $K$-armed case. The \algo\ algorithm is
described and analyzed in Section~\ref{sec: algo}. Finally, we provide
a numerical case study in Section~\ref{sec: numerics}, and conclude in
Section~\ref{sec:conclusion}.

Throughout the paper, references to the appendix (mainly for proofs of
certain technical results) point to the appendix in the supplementary
materials document.

\noindent {\bf Related literature:} There is a substantial literature
on the multi-armed bandit problem. We refer the reader to excellent
textbook treatments \cite{bubeck2012,lattimore} for an overview. In
this review, we restrict attention to (the few) papers that consider
MABs with multiple attributes.

\cite{drugan2013,yahyaa2015} consider the \emph{Pareto frontier} in
the attribute space; the goal in these papers is to play all
Pareto-optimal arms equally often. Another useful notion is
\emph{lexicographic optimality}, where the attributes are `ranked'
with `less important' attributes used to break ties in values of `more
important' attributes (see
\cite{ehrgott2005}). \cite{tekin2018,tekin2019} apply the notion of
lexicographic optimality to contextual MABs.

The paper closest to the present paper
is~\cite{kagrecha2020constrained}, which analyses a similar
constrained MAB formulation, but in the \emph{regret minimization}
setting. This paper proposes a UCB-style algorithm for this problem,
and establishes information theoretic lower bounds. The follow-up
paper~\cite{chang2020risk} proposes a Thompson Sampling based
variant. Special cases of the constrained MAB problem (with a risk
constraint) are considered in the pure exploration \emph{fixed
confidence} setting
in~\cite{david2018,hou2022almost}.~\cite{chang2020} considers an
average cost constraint (each arm has a cost distribution that is
independent of its reward distribution), pursuing the weaker goal of
\emph{asymptotic optimality}. A linear bandit setting is considered
in~\cite{pacchiano2021} under the assumption that there is at least
one arm which satisfies the constraints. Finally,~\cite{amani2019,
  moradipari2019} consider the problem of maximizing the reward
subject to satisfying a linear `safety' constraint with high
probability. None of the above mentioned papers considers the
\emph{fixed budget} pure exploration setting considered
here. Additionally, all the papers above (with the exception of
\cite{kagrecha2020constrained}) implicitly assume that the instance is
feasible; the present paper explicitly addresses the practically
relevant possibility that the learning agent may encounter an instance
where no arm meets the prescribed constraint(s).

%Interestingly, in contrast with the fixed budget pure exploration
%setting considered here, algorithms for regret minimization in
%unconstrained (single attribute) MABs generalize easily to the
%constrained formulation.

\section{Problem formulation}
\label{sec: prob}

In this section, we describe the formulation of the constrained
stochastic MAB problem studied here. We consider the fixed budget,
pure exploration framework; the MAB instance is parameterized by a
budget of $T$ rounds (a.k.a., arm pulls) and $K$ arms labelled $1,
\ldots, K$, each of which is associated with an a priori unknown
probability distribution. We consider a \emph{constrained} setting,
wherein the optimal arm is defined to be the one that optimizes a
certain attribute, subject to a constraint on another attribute. In a
nutshell, the goal of the learner (a.k.a., algorithm) is to identify
the optimal arm in the instance, and also to flag the instance as
being feasible or infeasible (i.e., indicating whether any or none of
the arms meets the constraint, respectively), using the budget of $T$
arm pulls for exploration.
%the constrained stochastic MAB problem is, under the fixed budget
%setting that is considered here, to either identify the arm that
%optimizes the objective subject to a constraint on another attribute
%or to declare the instance as being unsolvable, given a fixed number
%of rounds.
The rest of this section is devoted to formalizing this problem.

Each arm $i$ is associated with a possibly multi-dimensional
distribution $\nu(i)$. These distributions are unknown to the
learner. Let $\C$ denote the space of arm distributions, i.e., $\nu(i)
\in \C$ for all~$i.$ We define the objective and constraint attributes
$\mu_1$ and $\mu_2,$ respectively, to be functions from $\C$ to $\R$.
We henceforth refer to $\mu_j(i) = \mu_j(\nu(i))$ as the value of
attribute $j$ ($j \in \{1,2\}$) associated with arm~$i,$ with $\mu(i)$
denoting the vector $(\mu_1(i),\mu_2(i)).$ The user specifies a
threshold $\tau \in \R$, which defines an upper bound for the
attribute $\mu_2.$ An \textit{instance} of this constrained MAB
problem is specified by $(\nu, \tau)$ where $\nu=(\nu(1), \ldots,
\nu(K))$. The arms for which the constraint is satisfied, i.e.,
$\mu_2(i) \leq \tau$, are called \textit{feasible arms}; and the set of
feasible arms is denoted by $\mathcal{K}(\nu)$. The instance
$(\nu,\tau)$ is said to be feasible if $\mathcal{K}(\nu) \neq
\emptyset$, and infeasible if $\mathcal{K}(\nu) = \emptyset$.

Consider a feasible instance. We define an arm to be \textit{optimal}
if it has the least value of $\mu_1(\cdot),$ subject to the constraint
$\mu_2(\cdot) \leq \tau$. For simplicity of exposition, we assume that
there is a unique optimal arm.\footnote{As is well understood in the
pure exploration, fixed budget setting, it is straightforward to
handle the generalization where there are multiple optimal arms.} We
formally denote the optimal arm as $$J([K]) = \argmin_{i \in
  \mathcal{K}(\nu)} \mu_1(i).$$ Here, $[K] = \{1,\ldots,K\}$.  Without
loss of generality, we assume $J([K]) = 1,$ i.e., arm~1 is optimal.
An arm~$i$ is said to be \textit{suboptimal} if $\mu_1(i)>\mu_1(1)$
(irrespective of whether it is feasible or not). Further, an arm~$i$
is said to be a \textit{deceiver} if $\mu_1(i) \leq \mu_1(1),$ but
$\mu_2(i) > \tau.$ The different types of arms in a feasible instance
are illustrated in Figure~\ref{fig: feasible}.

Next, consider an infeasible instance. In this case, an optimal arm is
defined as the one with the smallest value of $\mu_2(\cdot),$ i.e., the
one that is `least infeasible.' As before, we assume for simplicity
that there is a unique optimal arm, denoted by $$J([K]) = \argmin_{i
  \in [K]} \mu_2(i).$$ An infeasible instance is illustrated in
Figure~\ref{fig: infeasible}. 

In each round $t \in [T]$, the learner chooses an arm from the set of
arms $[K]$ and observes a sample drawn from the corresponding
distribution (independent of past actions and observations). At the
end of $T$ rounds, the learner outputs a tuple
$(\hat{J}([K]),\hat{F}([K]))$, where $\hat{J}([K]) \in [K]$ and
$\hat{F}([K])$ is either \texttt{True} or \texttt{False}. The output
$\hat{J}([K])$ is the learner's recommendation for the optimal
arm. The output $\hat{F}([K])$ is a Boolean flag that indicates
whether the learner deems the instance as being feasible (in which
case $\hat{F}([K])=$\texttt{True}) or infeasible (in which case
$\hat{F}([K])=$\texttt{False}). Let us denote by $F([K])$ the correct
value of the feasibility flag for the instance, i.e.,
$F([K])=\texttt{True}$ if the instance is feasible and
$F([K])=\texttt{False}$ otherwise. The algorithm is evaluated based on
its probability of error $e_T$, which is,
\begin{align*}
    e_T=\mathbb{P} \left (  \left \{ J([K]) \neq  \hat{J}([K]) \right \} \cup \left \{ (\hat{F}([K]) \neq F([K]) \right \} \right ).
\end{align*}
For notational simplicity, we have suppressed the dependence of $e_T$
on the algorithm and the instance. The goal is to design algorithms
that minimize the probability of error.
%As explained in \citet{audibert-bubeck}, when the rewards are bounded

\begin{figure}
     \centering
     \begin{subfigure}{0.49\linewidth}
         \centering
  \begin{tikzpicture}[scale=0.6,    
  dot/.style = {circle, draw, fill=#1, inner sep=2pt}
  ]
    % axes
    \draw[draw,latex-] (0,5) +(0,0.5cm) node[above right] {$\mu_2$} -- (0,0);
    \draw[draw,-latex] (0,0) -- (5,0) -- +(0.5cm,0) node[below right] {$\mu_1$};
    
    % tau line
    \draw[dashed] (0,1.8) node[left] {$\tau$} -- (5,1.8);
    
    % points 
    \node[dot=black,  label=below:{1}]  at (2, 1) {};
    \node[dot=blue,   label=below:{2}]  at (3, 1.5) {};
    \node[dot=red,    label=3]  at (1, 2.8) {};
    \node[dot=yellow, label=4]  at (4, 3.5) {};
  \end{tikzpicture}
         \caption{Feasible instance}
         \label{fig: feasible}
     \end{subfigure}
     \hfill
     \begin{subfigure}{0.49\linewidth}
         \centering
  \begin{tikzpicture}[scale=0.6,    
  dot/.style = {circle, draw, fill=#1, inner sep=2pt}
  ]
    % axes
    \draw[draw,latex-] (0,5) +(0,0.5cm) node[above right] {$\mu_2$} -- (0,0);
    \draw[draw,-latex] (0,0) -- (5,0) -- +(0.5cm,0) node[below right] {$\mu_1$};
    
    % tau line
    \draw[dashed] (0,1.8) node[left] {$\tau$} -- (5,1.8);
    
    % points 
    \node[dot=black,  label=1]  at (2, 2.3) {};
    \node[dot=blue,   label=2]  at (3, 3) {};
    \node[dot=red,    label=3]  at (4, 3.4) {};
    \node[dot=yellow, label=4]  at (1, 4) {};
  \end{tikzpicture}

         \caption{Infeasible instance }
         \label{fig: infeasible}
     \end{subfigure}
        \caption{Panel~(a) shows a feasible instance. Arm 1 is
          optimal, arm 2 is feasible suboptimal, arm 3 is a deceiver,
          and arm 4 is infeasible suboptimal. Panel~(b) shows an
          infeasible instance (arm 1 is optimal).}
        \label{fig:three graphs}
\end{figure}

Finally, as stochastic MAB algorithms require estimators for the
attributes $\mu_1$ and $\mu_2$, which are functions of the data
samples of each arm, we assume the following concentration properties
for these estimators. Specifically, we assume that for $i \in \{1,2\}$
and distribution $G \in \mathcal{C}$, there exists an estimator
$\hat{\mu}_{i,n}(G)$ of $\mu_i(G)$ which uses $n$ i.i.d. samples
from~$G$, satisfying the following concentration inequality: There
exists $a_i>0$ such that for all $\Delta>0$,
\begin{align}
  \mathbb{P}\left(\left|\hat{\mu}_{i, n}(G)-\mu_{i}(G)\right| \geq \Delta\right) \leq 2 \exp \left(-a_{i} n \Delta^{2}\right).
  \label{eqn: conc_inequality}
\end{align}
Such concentration inequalities are commonly used for analyzing MAB
algorithms.\footnote{The standard practice when dealing with classical
  (unconstrained) MAB problems is to specify both the set~$C$ of arm
  distributions (for example, as the set of 1-subGaussians) and the
  attribute being optimized~$\mu_1$ (for example, the mean of the arm
  distribution). These choices then imply natural
  estimators~$\hat{\mu}_1$ and their corresponding concentration
  properties. In this work, to avoid working with a specific
  distribution class and a specific set of arm attributes, and to
  emphasize the generality of the proposed approach, we simply assume
  that attribute estimates satisfy concentration inequalities of the
  form~\eqref{eqn: conc_inequality}. Moreover, this particular form
  for the concentration inequality is assumed only for ease of
  exposition; changes to this form (as might be needed, for example,
  if the arm distributions are sub-exponential or heavy-tailed) lead
  to minor modifications to our algorithms and bounds.} For instance,
if the attributes can be expressed as expectations of sub-Gaussian or
bounded random variables (which are themselves functions of the arm
samples), concentration inequalities of the form \eqref{eqn:
  conc_inequality} would hold for the empirical average using the
Cram\'er-Chernoff bound or the Hoeffding inequality respectively
(refer Chapter~5 of \citet{lattimore}). Several risk measures like
Value-at-Risk (VaR) and Conditional Value-at-Risk (CVaR) also admit
estimators with concentration properties of the form~\eqref{eqn:
  conc_inequality}; see \cite{wang2010,cassel,kolla2019,bhat2019}.
  
Finally, we define the notion of \emph{consistency} of an
algorithm. An algorithm is said to be consistent over~$(\C,\tau)$ if,
for all instances of the form~$(\nu,\tau),$ where $\nu \in \C^K,$
$\lim_{T \ra \infty}e_T = 0.$

\begin{comment}
For a deceiver arm $i$, we define the gap
$\Delta_i=\mu_2(i)-\tau$. For a suboptimal feasible arm $i$, we define
the gap $\Delta_i=\mu_1(i)-\mu_1(1)$, which aligns with the
suboptimality gap definition from the one-dimensional mean
minimization case. For a suboptimal infeasible arm $i$, we define the
gap $\Delta_i=\max \{\mu_1(i)-\mu_1(1),\mu_2(i)-\tau\}$. The gap for
any arm $i$ is defined as $\mu_2(i)-\underset{k \in [K]}{\min}
\mu_2(k)$.

For $j\in\{1,2\}$, $1 \leq i \leq K$ and $t \in [T]$, let $T_{i}(t)$
denote the number of times arm $i$ was pulled till round $t$ and let
$\hat{\mu}_{j,s}(i)=\frac{1}{s}\sum_{t=1}^{s}X_{i, t}^{j}$ be the
empirical mean of the $j^{\textrm{th}}$ dimension of arm $i$ after $s$
pulls.
\end{comment}

\section{Lower bound}
\label{sec: lb}

In this section, we provide an information theoretic lower bound on
the probability of error under any algorithm, for a class of two-armed
Gaussian bandit instances. We then extrapolate this bound to
conjecture a lower bound for the general $K$-armed case. Crucially,
the lower bound for the two-armed case forms the basis for our
algorithm design (see Section~\ref{sec: algo}).

First, we define some sub-optimality gaps that will be used to state
our lower bounds, and also later when we discuss algorithms. Given two
arms $i$ and $j,$ we say $i \succ j$ if~$i$ is an optimal arm in a
two-armed instance consisting only of arms~$i$ and~$j.$ For $i \succ
j,$ define $\delta(i,j) = \delta(\mu(i),\mu(j))$ as
follows.\footnote{We abuse notation and use~$\delta(\mu(i),\mu(j))$ in
place of $\delta(i,j)$ when we need to emphasize the dependence
of~$\delta$ on the attribute values of arms~$i$ and~$j.$} $\delta(i,j) :=$
\begin{equation*}
  \displaystyle
   \left\{
  \begin{array}{ll}
    \sqrt{a_1} \left ( \mu_{1}(j)- \mu_{1}(i) \right ), & \text{ if } i, j \in \mathcal{K}(\nu),\\ 
    \sqrt{a_2} \left (\mu_2(j)-\tau \right), & \text{ if } i \in \mathcal{K}(\nu) \text{ and} \\ &  \quad j \notin \mathcal{K}(\nu) \text{ is a deceiver,} \\
    \max \bigl\{ \sqrt{a_2} \left ((\mu_2(j) - \tau \right ), & \text{ if } i \in \mathcal{K}(\nu) \text{ and}  \\
    \quad \sqrt{a_1} \left  (\mu_1(j)-\mu_1(i) \right ) \bigr\},
    & \quad j \notin \mathcal{K}(\nu) \text{ is suboptimal,} \\
    \sqrt{a_2} (\mu_2(j)-\mu_2(i)), & \text{ if } i,j \notin \mathcal{K}(\nu)
  \end{array}
  \right..
  \label{eq:delta_def}
\end{equation*}

Next, for $i \succ j,$ define $\Delta(i,j) = \Delta(\mu(i),\mu(j))$ as
follows.
\begin{equation*}
  \displaystyle
  \Delta(i,j) :=
    \min  \{ \sqrt{a_2}\ |\tau - \mu_2(i)|,  \delta(i,j)  \}.
\end{equation*}

\ignore{
  We define the following gap function $\Delta(\cdot, \cdot)$
  between any two arms that are a measure of how hard it is for any
  algorithm to give the correct output on that two-armed instance.

\begin{definition}
$\Delta(\cdot, \cdot)$ and $\delta(\cdot,\cdot)$ are functions from $[K] \times [K]$ to $\mathbb{R}$ defined for a given instance. The first argument is always $J([K])$ and the gaps are defined with respect to this arm. Let $j \in [K]$ be any arm.\\ Let the instance be feasible. If $j$ is a feasible suboptimal arm, we define
 \begin{align*}
 \delta(J([K]),j)=\sqrt{a_1} \left ( \mu_{1}(j)- \mu_{1}(J([K])) \right ).
 \end{align*}
If $j$ is a deceiver arm, we define
 \begin{align*}
 \delta(J([K]),j)=\sqrt{a_2} \left (\mu_2(j)-\tau \right ).
 \end{align*}
If $j$ is an infeasible suboptimal arm, we define
 \begin{align*}
 \delta(J([K]),j)=\max \{ \sqrt{a_2} \left ((\mu_2(j) - \tau \right ) , \sqrt{a_1} \left  (\mu_1(j)-\mu_1(J([K])) \right ) \}.
 \end{align*}
Let the instance be infeasible. Then, we define
\begin{align*}
 \delta(J([K]),j)=\sqrt{a_2} \left ( \mu_{2}(j)- \mu_{2}(J([K])) \right ).
\end{align*}
Let $j$ be any arm. $\Delta(\cdot, \cdot)$ is defined as:
\begin{align*}
\Delta(J([K]),j) &=\min \left \{ \sqrt{a_1} \left ( \tau - \mu_2(J([K])) \right ), \delta(J([K]),j) \right \},
\shortintertext{if the instance is feasible, else:}
\Delta(J([K]),j) &=\delta(J([K]),j).
\end{align*}
\end{definition}
} %end ignore

As we will see, the smaller the value of $\Delta(i,j),$ the harder it
is for a learner to identify the optimal arm~$i$ in a two-armed
instance consisting of arms~$i$ and~$j.$ Thus, one may
interpret~$\Delta(i,j)$ as the `suboptimality gap' of arm~$j$
(relative to arm~$i$); note that this gap depends on the values of the
objective attributes, the constraint attributes, the threshold~$\tau,$
and also the concentration parameters~$a_1,a_2.$ For example, if $i$
is feasible and $j$ is feasible and suboptimal, $\Delta(i,j) =
\min\{\sqrt{a_2}\left (\tau - \mu_2(i) \right), \sqrt{a_1} \left (
\mu_{1}(j)- \mu_{1}(i) \right )\}.$ Thus, the closer~$i$ is to the
constraint boundary, and the smaller the gap between~$i$ and~$j$ in
the objective attribute, the harder it is to identify~$i$ as the
optimal arm in this pair. The gaps for the other cases can be
interpreted in a similar manner.

\ignore{For instance, for a feasible suboptimal arm $j$ in a feasible
  instance, a larger gap $\delta(\cdot, \cdot)$ with respect to the
  optimal arm would imply that their objective attributes are
  ``farther away", thus making it easier for any algorithm to
  differentiate between the optimal arm and arm $j$. A larger gap
  $\Delta(\cdot, \cdot)$ with respect to the optimal arm would imply
  that the constraint attribute of the optimal arm lies sufficiently
  ``far away" from $\tau$, thus reducing the likelihood of a good
  learner concluding that $J([K])$ is an infeasible arm.} 

We are now ready to state our information theoretic lower
bound. Consider the class of arm distributions $\mathcal{D}$, which
consists of 2-dimensional Gaussian distributions with covariance
matrix $\Sigma =
\textrm{diag}\left(\frac{1}{2a_1},\frac{1}{2a_2}\right).$ Attribute
$\mu_1$ is the mean of the first dimension, while attribute $\mu_2$ is
the mean of the second dimension. Note that the empirical average
estimator satisfies~\eqref{eqn: conc_inequality} for both attributes.

%As defined in \citet{kaufmann16}, given an environment $\mathcal{E}$,
%a family of algorithms $A$ is called \emph{consistent} if, for every
%choice of $\nu \in \mathcal{E}$, $e_T(\nu)$ tends to zero as $T$ tends
%to infinity.

%The following theorem summarizes the lower bound on the probability
%of error for any instance $\nu \in \mathcal{D}$. The proof of this
%theorem closely follows that of Theorem 12 in \cite{kaufmann16} and
%can be found in <\textsc{insert ref to section}>.

\begin{theorem}
  \label{thm: 2 arms lb}
Let $\nu$ be a two-armed bandit instance where $\nu(i) \in
\mathcal{D}$ for $i \in \{1,2\},$ with attribute~$\mu_1$ being the
mean of the first dimension of the arm distribution, and
attribute~$\mu_2$ being the mean of the second dimension of the arm
distribution. Under any consistent algorithm,
\begin{align*}
\limsup_{T \rightarrow \infty}-\frac{1}{T} \log e_{T}(\nu) \leq (\Delta(1,2))^2,
\end{align*}
where arm~1 is taken to be the optimal arm (without loss of generality).
\end{theorem}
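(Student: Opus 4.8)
The plan is to establish this lower bound via a standard change-of-measure argument, which is the canonical technique for information-theoretic lower bounds in multi-armed bandits. The key idea is that to prove an upper bound on the exponential decay rate of $e_T$, I must exhibit a \emph{confusing} alternative instance $\nu'$ (with arm distributions in $\mathcal{D}$) that is statistically close to $\nu$ but has a different correct answer, forcing any consistent algorithm to err on at least one of the two instances.

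\textbf{Setting up the change of measure.} First I would recall the transportation/data-processing inequality that lower-bounds the sum of expected pull counts times KL-divergences by a $\mathrm{kl}$ term involving the error probabilities on the two instances. Concretely, for instances $\nu$ and $\nu'$, letting $T_i$ denote the (random) number of pulls of arm $i$ under budget $T$, one has
\begin{align*}
\sum_{i=1}^{2} \Exp_{\nu}[T_i] \, \mathrm{KL}\bigl(\nu(i), \nu'(i)\bigr) \geq \mathrm{kl}\bigl(\mathbb{P}_\nu(\mathcal{A}), \mathbb{P}_{\nu'}(\mathcal{A})\bigr)
\end{align*}
for any event $\mathcal{A}$ measurable with respect to the observations, where $\mathrm{kl}(\cdot,\cdot)$ is the binary KL-divergence. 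I would choose $\mathcal{A}$ to be the event that the algorithm outputs the correct answer on $\nu$ (the event $\{\hat{J}=1,\ \hat{F}=F\}$), so that $\mathbb{P}_\nu(\mathcal{A}) \geq 1 - e_T(\nu)$ while $\mathbb{P}_{\nu'}(\mathcal{A}) \leq e_T(\nu')$ by consistency, since $\mathcal{A}$ corresponds to an \emph{incorrect} answer on the confusing instance $\nu'$. Because the distributions lie in $\mathcal{D}$ with the prescribed diagonal covariance, the KL-divergence between $\nu(i)$ and $\nu'(i)$ takes the explicit Gaussian form $a_1(\mu_1(i)-\mu_1'(i))^2 + a_2(\mu_2(i)-\mu_2'(i))^2$, which is exactly why the gap definitions carry the $\sqrt{a_1}, \sqrt{a_2}$ factors.

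\textbf{Constructing the confusing instances.} The heart of the proof, and the step I expect to be the main obstacle, is designing the family of alternative instances $\nu'$ that achieves the bound $(\Delta(1,2))^2$ across \emph{all} the cases in the definition of $\delta(1,2)$ and the min-with-$\sqrt{a_2}|\tau-\mu_2(1)|$ term in $\Delta$. For each configuration (both feasible, deceiver, infeasible-suboptimal, both infeasible), I would perturb only arm $2$'s attributes minimally to make arm $2$ become optimal — pushing $\mu_1(2)$ or $\mu_2(2)$ just across the relevant decision boundary — giving a perturbation cost equal to $\delta(1,2)^2$. Separately, to capture the $\sqrt{a_2}|\tau - \mu_2(1)|$ term, I would perturb arm $1$ alone by raising $\mu_2(1)$ above $\tau$, which flips the feasibility status of arm $1$ (and hence the correct feasibility flag or optimal arm), at cost $a_2(\tau-\mu_2(1))^2$. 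Taking the cheaper of these two perturbations gives the $\min$ structure, and its squared magnitude is precisely $(\Delta(1,2))^2$.

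\textbf{Concluding.} With the confusing instance in hand, I would bound $\Exp_\nu[T_i] \leq T$, use $\mathrm{kl}(1-e_T(\nu), e_T(\nu')) \geq \log\frac{1}{e_T(\nu) + e_T(\nu')} - \log 2$ (or an equivalent lower bound on the binary KL that grows like $-\log(\max(e_T(\nu), e_T(\nu')))$), and rearrange to obtain
\begin{align*}
-\frac{1}{T}\log\bigl(\max\{e_T(\nu), e_T(\nu')\}\bigr) \leq (\Delta(1,2))^2 + o(1).
\end{align*}
Taking $\limsup_{T\to\infty}$ and using consistency to control the lower-order terms yields the claim for the instance $\nu$ with the smaller error rate; a symmetrization argument (applying the inequality with the roles of $\nu$ and $\nu'$ interchanged, or noting that the bound must hold for the original instance of interest) finishes the proof. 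The main delicacy throughout is verifying case-by-case that the minimal single-arm perturbation keeping distributions in $\mathcal{D}$ indeed realizes $\delta(1,2)$, and that the feasibility-flip perturbation is correctly accounted for in each regime.
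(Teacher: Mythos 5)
Your overall strategy---a change of measure via the transportation inequality, with a case-by-case construction of confusing instances whose minimal perturbation cost realizes $(\Delta(1,2))^2$---is the same as the paper's, but there is a genuine gap in the case where arm~2 is \emph{infeasible and suboptimal}, i.e., $\mu_1(2)>\mu_1(1)$ and $\mu_2(2)>\tau$, for which $\delta(1,2)=\max\{\sqrt{a_2}(\mu_2(2)-\tau),\sqrt{a_1}(\mu_1(2)-\mu_1(1))\}$. Your plan perturbs \emph{only arm 2}, ``pushing $\mu_1(2)$ or $\mu_2(2)$ just across the relevant decision boundary'' at claimed cost $\delta(1,2)^2$. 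In this case, however, pushing a single coordinate of arm~2 does not change the correct answer: moving only $\mu_2(2)$ below $\tau$ makes arm~2 feasible but still suboptimal, and moving only $\mu_1(2)$ below $\mu_1(1)$ makes arm~2 a deceiver; either way the correct output remains $(1,\texttt{True})$. To make arm~2 optimal by perturbing it alone you must push \emph{both} coordinates across their boundaries, which costs the \emph{sum} $a_2(\mu_2(2)-\tau)^2+a_1(\mu_1(2)-\mu_1(1))^2$ rather than the max. That sum can be as large as $2\,\delta(1,2)^2$, so your construction only yields
\begin{align*}
\limsup_{T\to\infty}-\tfrac{1}{T}\log e_T(\nu)\leq \min\bigl\{a_2(\tau-\mu_2(1))^2,\ a_2(\mu_2(2)-\tau)^2+a_1(\mu_1(2)-\mu_1(1))^2\bigr\},
\end{align*}
which can strictly exceed $(\Delta(1,2))^2$; the theorem is not established in this case.

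The missing idea is that the transportation bound controls the decay rate by the \emph{maximum} of the two per-arm KL divergences (the expected pull fractions sum to one), so the cheapest confusing instance may perturb \emph{both} arms. This is exactly what the paper does here: it moves the objective attributes of both arms to a common intermediate value and moves $\mu_2'(2)$ to $\tau$; writing $M=\sqrt{a_1}(\mu_1(2)-\mu_1(1))$, $y=\sqrt{a_2}(\mu_2(2)-\tau)$, $z=\max\{y,M\}$, optimizing the intermediate point gives a max-cost of $\bigl(\tfrac{y^2+z^2}{2z}\bigr)^2\leq z^2=\delta(1,2)^2$, which is what the theorem requires. (Your constructions for the both-feasible, deceiver, and infeasible-instance cases are fine with single-arm perturbations.) A secondary, fixable issue: as written, your inequality uses weights $\Exp_\nu[T_i]$, so it lower-bounds $\log(1/e_T(\nu'))$ and hence controls the error decay of the \emph{alternative} instance; you do flag the needed role swap, and the paper avoids it by weighting with $\Exp_{\nu'}[N_j(T)]$ from the start, so this is not a real obstruction---the Case-3 construction is.
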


The proof of Theorem~\ref{thm: 2 arms lb} can be found in
Appendix~\ref{app: 2_arm_lb_proof}. Note that Theorem~\ref{thm: 2 arms
  lb} provides an upper bound on the (asymptotic) exponential rate of
decay of the probability of error as $T \ra \infty.$ Specifically, the
decay rate can be at most $\Delta^2(1,2).$ This formalizes the
interpretation of $\Delta(1,2)$ as a `suboptimality gap' between arm~2
and arm~1. It is instructive to see which aspects of the arm
attributes influence this suboptimality gap. For example, if both
arms~1 and~2 are feasible, then $\Delta(1,2)$ depends on the
\emph{optimality gap} (i.e., $\mu_1(2) - \mu_1(1)$) and the
\emph{feasiblity gap} of arm 1 (i.e., $\tau - \mu_2(1)$) but not on
the \emph{feasibility gap} of arm~2 (i.e., $\tau - \mu_2(2)$). On the
other hand, if arm~1 is feasible and arm~2 is a deceiver, then
$\Delta(1,2)$ depends on the \emph{feasibility gap} of arm~1 (i.e.,
$\tau - \mu_2(1)$) and the \emph{infeasibility gap} of arm~2 (i.e.,
$\tau - \mu_2(2)$), but not on the gap between the objective
attributes. In Section~\ref{sec: algo}, we design an algorithm that
eliminates arms from consideration sequentially based on estimates of
these (pairwise) suboptimality gaps.
 
\ignore{
Thus, the rate of decay of the probability of error depends on the gap
between the optimal and the non-optimal arm. Note that if $\tau$ tends
to infinity in \eqref{thm: 2 arms lb}, the lower bound obtained is the
same as that in Theorem 12 of \citet{kaufmann16}. For instance,
consider a feasible instance where the non-optimal arm is feasible
suboptimal. Theorem \ref{thm: 2 arms lb} says that the probability of
error for any consistent algorithm on this instance does not depend
upon the constraint attribute of $J([2])^{\mathsf{c}}$. Similarly, if
$J([2])^{\mathsf{c}}$ is a deceiver arm, then the probability of error
for any consistent algorithm on this instance does not depend upon the
objective dimension of $J([2])^{\mathsf{c}}$.
}

Based on Theorem \ref{thm: 2 arms lb}, and results
from~\cite{audibert-bubeck} on the classical (unconstrained) MAB
problem, we conjecture the following extension of Theorem~\ref{thm: 2
  arms lb} to the case of $K$ arms as follows. Taking arm~1 to be the
optimal arm without loss of generality, define $H_1:=\sum_{i=2}^{K}
\frac{1}{\Delta^2(1,i)}$.
\begin{conjecture}
  \label{thm: K arms lb}
  Let $\nu$ be a $K$-armed bandit instance where $\nu(i) \in
  \mathcal{D}, i \in [K],$ with attribute $\mu_1$ being the mean of
  the first dimension of the arm distribution, and attribute $\mu_2$
  being the mean of the second dimension of the arm
  distribution. Under any consistent algorithm,
  \begin{align}
    \limsup_{T \rightarrow \infty}-\frac{1}{T} \log e_{T}(\nu) \leq
    \frac{d}{H_1}, 
  \end{align}
  where~$d$ is a universal positive constant.
\end{conjecture}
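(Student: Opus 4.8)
The plan is to bootstrap the two-armed bound of Theorem~\ref{thm: 2 arms lb} into a $K$-armed statement using a change-of-measure argument coupled with a weighted pigeonhole over the arms' expected pull counts. First, for each suboptimal arm $i \in \{2,\ldots,K\}$ I would build an alternative instance $\nu^{(i)}$ that agrees with $\nu$ on every arm except one and under which the correct output (the identity of the optimal arm, or the feasibility flag) differs from that under $\nu$. Following the two-armed construction, $\nu^{(i)}$ is obtained by the cheapest (in KL) perturbation that flips the answer on the pair $\{1,i\}$; since the arms are Gaussian with covariance $\mathrm{diag}(1/2a_1, 1/2a_2)$, the per-sample divergence of this minimal perturbation equals $\Delta^2(1,i)$ exactly. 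Establishing this identity is precisely the case analysis (both feasible, deceiver, infeasible-suboptimal, both infeasible) already carried out for Theorem~\ref{thm: 2 arms lb}, which I would import verbatim.

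Given these alternatives, the core estimate is the transportation inequality: because $\nu$ and $\nu^{(i)}$ differ on a single arm, the KL divergence between the distributions they induce on the $T$-round history is $\Exp_{\nu}[T_i(T)]\,\Delta^2(1,i)$. Using $\sum_{i=2}^{K}\Exp_{\nu}[T_i(T)] \le T$ together with the elementary inequality
\begin{align*}
\min_{2 \le i \le K} \Exp_{\nu}[T_i(T)]\,\Delta^2(1,i)
\;\le\; \frac{\sum_{i=2}^{K}\Exp_{\nu}[T_i(T)]}{\sum_{i=2}^{K} 1/\Delta^2(1,i)}
\;\le\; \frac{T}{H_1},
\end{align*}
I would extract a critical arm $i^\star$ whose history-level divergence is at most $T/H_1$. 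Applying the Bretagnolle--Huber inequality to the event ``the output is correct for $\nu$'' (which forces an error under $\nu^{(i^\star)}$, the two instances having distinct correct answers) then yields $e_T(\nu) + e_T(\nu^{(i^\star)}) \ge \tfrac12 \exp(-T/H_1)$, capping by $1/H_1$ the exponential decay rate of the error on at least one of the two instances.

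The main obstacle --- and the reason the statement is a conjecture rather than a theorem --- lies in converting this into a \emph{per-instance} bound on $e_T(\nu)$ alone. The critical arm $i^\star$ is chosen through the algorithm-dependent quantities $\Exp_{\nu}[T_i(T)]$, so the comparison instance $\nu^{(i^\star)}$ varies with the algorithm and with $T$; consistency guarantees only $e_T(\nu^{(i^\star)}) \to 0$, with no control on its rate, so one cannot subtract it off to isolate $e_T(\nu)$. Averaging the two-point bound against a prior on $\{\nu, \nu^{(2)}, \ldots, \nu^{(K)}\}$ recovers a clean inequality, but a minimax one over the family rather than the instance-dependent claim. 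This is exactly the gap familiar from the unconstrained problem, where the rate achievable by known algorithms can fall short of the best known lower-bound rate by up to a factor logarithmic in $K$ (see \cite{audibert-bubeck}), and the universal constant $d$ is present precisely to absorb the factors lost in whatever averaging or union argument is used.

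A second, constrained-specific difficulty is that $\Delta(1,i) = \min\{\sqrt{a_2}|\tau - \mu_2(1)|, \delta(1,i)\}$ may be bottlenecked, for several arms simultaneously, by the common feasibility gap $\sqrt{a_2}|\tau - \mu_2(1)|$ of arm~1. For such arms the cheapest answer-flipping perturbation acts on arm~1 (rendering it infeasible) rather than on the distinct arms $i$, so the pigeonhole over \emph{separate} arms that produces the sum $H_1$ double-counts a single shared alternative. Reconciling this shared ``arm-1 infeasibility'' alternative with the per-arm alternatives, so that the aggregate complexity genuinely equals $H_1$ up to the constant $d$, is the part I expect to be hardest, and is likely the reason the $K$-armed bound is left as a conjecture.
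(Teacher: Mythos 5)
The statement you set out to prove is not proved in the paper at all: it is explicitly a \emph{conjecture}. The paper's only remarks on it are that existing fixed-budget lower-bound techniques \citep{audibert-bubeck,kaufmann16,carpentier2016tight} do not generalize to the constrained setting, and the concluding section lists proving it as the main open problem. So there is no paper proof to compare your attempt against; the only honest assessment is whether your sketch closes the gap, and --- as you yourself conclude --- it does not.

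That said, your diagnosis of where the argument breaks is accurate, and it is worth recording that the failure is two-sided. With the history-level KL computed under $\nu$ (your route), the pigeonhole $\sum_{i\ge 2}\Exp_\nu[T_i(T)] \le T$ is valid and Bretagnolle--Huber gives $e_T(\nu) + e_T(\nu^{(i^\star)}) \ge \tfrac12 \exp(-T/H_1)$, but this bounds only the \emph{sum}; consistency gives $e_T(\nu^{(i^\star)}) \to 0$ with no rate, and since the right-hand side also vanishes, nothing prevents the alternative instance from absorbing the entire bound. With the KL computed under the alternatives instead (the route the paper's own two-armed proof in Appendix~A takes, via Lemma~1 of \citet{kaufmann16} plus consistency, which \emph{does} isolate $e_T(\nu)$), the relevant quantities become $\Exp_{\nu^{(i)}}[N_i(T)]$ for $K-1$ \emph{different} measures: these need not sum to at most $T$, and indeed a good algorithm pulls arm $i$ order-$T$ times under $\nu^{(i)}$, so the pigeonhole collapses. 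Your second obstruction is also genuine and constrained-specific: whenever $\Delta(1,i) = \sqrt{a_2}\,|\tau-\mu_2(1)|$ for several arms $i$ simultaneously, the cheapest answer-flipping perturbation modifies arm~1 (not arm~$i$), so the history-level KL involves $\Exp[T_1(T)]$ --- which is large, arm~1 being optimal --- rather than $\Exp[T_i(T)]$, and the per-arm accounting behind $H_1$ double-counts one shared alternative. (A minor quibble: the minimal per-sample KL equals $\Delta^2(1,i)$ only up to universal constant factors --- the paper's two-armed case analysis carries factors of $\tfrac12$ and $\tfrac14$ --- but these are absorbed into $d$.) In short: the steps you claim as established are correct, your identification of the gaps matches the paper's stated reason for leaving the statement as a conjecture, and the statement remains open.
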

%It is possible to prove this lower bound for special cases such as
%when all arms are feasible or when all arms are infeasible (see
%Appendix~\ref{app: K arms lb}).
%
The main challenge in proving this conjecture for a general $K$-armed
instance is that exising lower bound approaches for the unconstrained
setting \citep{audibert-bubeck,kaufmann16,carpentier2016tight} do not
generalize to the constrained setting.
%The primary challenge in proving this conjecture for more
%general classes of instances lies in the lack of asymptotic lower
%bounds for general classes of $K$-armed instances even in the
%classical (unconstrained) MAB fixed budget problem (the closest being
%Theorem 16 of \cite{kaufmann16}).
As per Conjecture~\ref{thm: K arms lb}, $H_1$ can be interpreted as a
measure of the hardness of the instance under consideration. Indeed,
this definition of~$H_1$ agrees with the hardness measure that appears
in lower bounds for the classical (unconstrained) MAB problem (also
denoted~$H_1;$ see \cite{audibert-bubeck,kaufmann16}) when~$\tau \ra
\infty.$

%The rate of decay of the probability of error of any consistent
%algorithm is inversely proportional to $H$. Thus, $H$ is a measure of
%how hard it is for a consistent algorithm to recommend a wrong
%output.

\section{The \textsc{Constrained-SR} algorithm}
\label{sec: algo}

In this section, we propose the \algo\ algorithm for the constrained
MAB problem posed in Section~\ref{sec: prob}, and provide a
performance guarantee via an upper bound on the probability of error
under this algorithm. This upper bound compares favourably with the
information theoretic lower bound conjectured in Section~\ref{sec: lb}
(see Conjecture~\ref{thm: K arms lb}), suggesting that the
\algo\ algorithm is nearly optimal. Importantly, the design of the
\algo\ algorithm is motivated by our information theoretic lower bound
for the two-armed case (see Theorem~\ref{thm: 2 arms lb});
\algo\ rejects arms sequentially based on \emph{estimates} of the same
pairwise suboptimality gaps that appear in the lower bound.

\noindent {\bf Algorithm description:} The \algo\ algorithm is based on
the well-known Successive Rejects (SR) framework proposed by
\citet{audibert-bubeck}. Informally, SR runs over $K-1$ phases; at the
end of each phase, one arm (the one that looks empirically `worst') is
rejected from consideration. Specifically, SR defines positive
integers $n_1,n_2,\ldots,n_{K-1},$ such that $n_1 < n_2 < \cdots <
n_{K-1}$ and $n_1 + n_2 + \cdots + n_{K-2} + 2 n_{K-1} \leq T$ (see
Algorithm~\ref{algo: pairwise} for the details). In phase~$k,$ each of
the surviving $K-k+1$ arms is pulled $n_k - n_{k-1}$ times. (This
means that by the end of phase~$k,$ each surviving arm has been
pulled~$n_k$ times.) The sole arm that survives at the end of
phase~$K-1$ is declared to be optimal, and the instance is flagged as
feasible (respectively, infeasible) if this surviving arm `appears'
feasible (respectively, infeasible).

\algo\  (formal description as Algorithm~\ref{algo: pairwise}) differs
from SR in the criterion used to reject an arm at the end of each
phase. Note that the classical SR algorithm is designed for a single
attribute; this makes the choice of the empirically `worst' arm
obvious. In contrast, the elimination criterion for our constrained
MAB problem should depend on estimates of \emph{both} attributes for
each surviving arm. The \algo\  algorithm does this as follows: Let
$\hat{J}(A_k)$ denote the arm that `appears' optimal at the end of
phase~$k,$ where $A_k$ denotes the set of surviving arms at the
beginning of phase~$k.$ Formally, letting~$\hat{\mu}^k_j(i)$ denote
the estimate of attribute~$j$ for arm~$i$ at the end of phase~$k,$
\begin{equation}
  \label{eq:emp_opt_arm}
  \displaystyle
  \hat{J}(A_k) =
  \left\{
  \begin{array}{ll}
    \displaystyle
    \argmin_{i \in A_k \colon \hat{\mu}^k_2(i) \leq \tau} \hat{\mu}^k_1(i), & 
    \{i \in A_k \colon \hat{\mu}^k_2(i) \leq \tau\} \neq \emptyset \\
    \displaystyle
  \argmin_{i \in A_k} \hat{\mu}^k_2(i), & 
  \{i \in A_k \colon \hat{\mu}^k_2(i) \leq \tau\} = \emptyset.
  \end{array}
  \right.
\end{equation}
Then, the gaps $\delta(\hat{J}(A_k),i)$ are estimated for all arms~$i
\in A_k$ as follows.
\begin{equation}
  \hat\delta\left(\hat{J}(A_k),i\right) :=
  \delta\left(\hat{\mu}^k(\hat{J}(A_k),\hat{\mu}^k(i)\right),
  \label{eq:delta_hat}
\end{equation}
where $\hat{\mu}^k(i) :=
\left(\hat{\mu}^k_1(i),\hat{\mu}^k_2(i)\right).$ In other words, the
gaps relative to the `seemingly optimal' arm are estimated by
replacing the (unknown) arm attributes by their available estimates.
\ignore{
If $\hat{\mu}^k_{2}(\hat{J}(A_k)) \leq \tau,$
i.e., arm~$\hat{J}(A_k)$ `appears' feasible, then
\begin{equation*}
    \displaystyle
  \hat\delta(\hat{J}(A_k),i) = \left\{
  \begin{array}{ll}
    \sqrt{a_1} \left ( \hat{\mu}^k_{1}(i)- \hat{\mu}^k_{1}(\hat{J}(A_k)) \right) & \text{ if }  \hat{\mu}^k_{2}(i) \leq \tau \text{ (i.e., $i$ appears feasible)} \\
    \sqrt{a_2} \left (\hat{\mu}^k_2(i)-\tau \right) & \text{ if } \hat{\mu}^k_{2}(i) > \tau,\ \hat{\mu}^k_1(i) \leq \hat{\mu}^k_{1}(\hat{J}(A_k)) \\
    & \text{\quad (i.e., $i$ appears to be a deceiver)} \\
    \max \{ \sqrt{a_2} \left ((\mu^k_2(i) - \tau \right ) , \sqrt{a_1} \left  (\mu_1^k(i)-\mu_1^k(\hat{J}(A_k)) \right ) \}
    & \text{ if } \hat{\mu}^k_{2}(i) > \tau,\ \hat{\mu}^k_1(i) > \hat{\mu}^k_{1}(\hat{J}(A_k))\\    
  \end{array}
  \right..
\end{equation*}
}% end ignore

Finally, the arm $\argmax_{i \in A_k} \hat{\delta}(\hat{J}(A_k),i),$
i.e., the arm with the largest estimated gap relative to
$\hat{J}(A_k),$ is rejected, with the following rule used to break
ties.\footnote{This tie-breaking rule plays a key role in the
performance of \algo; in contrast, the tie-breaking rule is
\emph{inconsequential} in the original SR algorithm for single
attribute MABs.}
%Note that \algo uses a sophisticated
%tie-breaking rule, as opposed to classical SR, which uses a random
%tie-breaking rule. We first compute
Let
\begin{equation}
\hat{D}(A_k) = \{ \underset{i \in A_k, i \neq \hat{J}(A_k) }{\argmax} \hat{\Delta}(\hat{J}(A_k),i) \}
\end{equation}
denote the set of arms in $A_k$ that achieve the same maximizing
$\hat\Delta(\hat{J}(A_k),\cdot)$. We denote by
$\mathcal{K}(\hat{D}(A_k))$ the set of arms in $\hat{D}(A_k)$ that
appear empirically feasible (i.e., satisfying~$\hat{\mu}^k_2(\cdot)
\leq \tau$).

\noindent $\bullet$ If $\mathcal{K}(\hat{D}(A_k))$ is a strict subset
of $\hat{D}(A_k),$ the arm that appears the most infeasible (i.e., the
arm in $\hat{D}(A_k) \setminus \mathcal{K}(\hat{D}(A_k))$ with the
largest value of $\hat\mu_2^k(\cdot)$) is rejected.

\noindent $\bullet$ Else, the arm that appears feasible, but most
suboptimal (i.e., the arm in $\mathcal{K}(\hat{D}(A_k))$ with the
largest value of $\hat\mu_1^k(\cdot)$) is rejected.

%all arms in $\hat{D}(A_k)$ are empirically feasible, we reject the
%one with the highest $\hat\mu_1^k(\cdot)$. Else, the empirically
%infeasible arm with the highest $\hat\mu_2^k(\cdot)$ is rejected.
%Finally, at the end of phase~$K-1,$ the algorithm outputs the sole
%surviving arm and a boolean feasibility flag denoting whether the
%instance is feasible or not. The feasibility flag is set to
%\texttt{True} if the sole surviving arm is feasible and
%\texttt{False} otherwise.

\emph{Remark:} We motivate the rationale behind the tie-breaking rule
of the \algo\ algorithm via the scenario shown in Figure~\ref{fig:
  algo_tie_breaking} at the end of a generic phase. Here, arm~1
appears optimal, with $\hat\Delta(1,2) = \hat\Delta(1,3),$ both gaps
being equal to the (small) feasibility gap of arm~1 (i.e., $\tau -
\hat\mu_2(1)$). However, the arms~2 and~3 are not `symmetric' from the
standpoint of the algorithm. Since the feasibility/infeasibility
status of arm~1 is `uncertain' (given how close it is to the $\tau$
boundary), eliminating arm~3 is riskier, since it might be the optimal
arm in case arm~1 is subsequenly found to be infeasible. On the other
hand, eliminating arm~2 first is `safer,' since it is less likely to
be the optimal arm.

\ignore{Unlike the classical SR algorithm, these two arms having the
  same value of $\Delta(1,\cdot)$ is not a zero probability
  event. Moreover, these two arms are not `symmetric' for a random
  tie-breaking rule to make sense; arm 2 is infeasible while arm 3 is
  feasible. Recommending an infeasible arm at the end would result in
  the algorithm predicting both the optimal arm and the feasibility
  flag wrong, while recommending a feasible arm at the end would
  result in an error in predicting only the optimal arm. Thus, it
  makes sense to `hoard' feasible arms and remove infeasible arms
  first, which is the motivation behind the Infeasible First algorithm
  (see Appendix~\ref{app: IF} for a formal description of the
  algorithm).}

\begin{figure}[t]
  \centering
     \begin{subfigure}{0.49\linewidth}
         \centering
  \begin{tikzpicture}[scale=0.55,    
  dot/.style = {circle, draw, fill=#1, inner sep=2pt},every node/.style={scale=0.8}
  ]
    % axes
    \draw[draw,latex-] (0,5) +(0,0.5cm) node[above right] {$\hat\mu_2$} -- (0,0);
    \draw[draw,-latex] (0,0) -- (5,0) -- +(0.5cm,0) node[below right] {$\hat\mu_1$};
    
    % tau line
    \draw[dashed] (0,1.8) node[left] {$\tau$} -- (5,1.8);
    
    % points 
    \node[dot=black,  label=below:{1}]  at (2, 1.4) {};
    \node[dot=blue,   label=2]  at (1, 3.8) {};
    \node[dot=red,    label=below:{3}]  at (4, 1) {};
  \end{tikzpicture}

         \caption{}
         \label{fig: algo_tie_breaking}
     \end{subfigure}
     \hfill
     \begin{subfigure}{0.49\linewidth}
         \centering
  \begin{tikzpicture}[scale=0.55,    
  dot/.style = {circle, draw, fill=#1, inner sep=2pt},every node/.style={scale=0.8}
  ]
    % axes
    \draw[draw,latex-] (0,5) +(0,0.5cm) node[above right] {$\hat\mu_2$} -- (0,0);
    \draw[draw,-latex] (0,0) -- (5,0) -- +(0.5cm,0) node[below right] {$\hat\mu_1$};
    
    % tau line
    \draw[dashed] (0,1.8) node[left] {$\tau$} -- (5,1.8);
    
    % points 
    \node[dot=black,  label=1]  at (0.5, 2.1) {};
    \node[dot=blue,   label=2]  at (3, 0.5) {};
    \node[dot=red,    label=3]  at (4, 0.5) {};
  \end{tikzpicture}
         \caption{}
         \label{fig: algo_if}
     \end{subfigure}
     \caption{Panel~(a) shows a feasible instance that motivates our
       tie-breaking rule. Panel~(b) shows a feasible instance that
       motivates the use of estimates of our information theoretic
       suboptimality gaps to guide arm elimination.}
        \label{fig: algo_motivation}
\end{figure}

\emph{Remark:} While the above example might suggest that it is sound
to blindly eliminate seemingly infeasible arms first, the scenario
shown in Figure~\ref{fig: algo_if} (again, at the end of a generic
phase) demonstrates that this is not always the case. Here, arm~2
appears optimal, but arm~1, placed slightly above the $\tau$ boundary,
might be optimal if $\hat\mu_2(1)$ is a (small) overestimation of
$\mu_2(1)$. It is therefore `safer' in this scenario to eliminate
arm~3; this is exactly what \algo\ would do, since $\hat\Delta(2,1) <
\hat\Delta(2,3).$ This highlights the importance of the sophisticated
elimination criterion employed by \algo, that captures the relative
likelihoods of different arms being optimal (via estimates of
information theoretic suboptimality gaps).

\ignore{the optimal arm (arm 1) is just below the $\tau$ boundary,
  while arms 2 and 3 are feasible suboptimal and farther away from the
  $\tau$ boundary than arm 1. In this instance, it is likely
  (especially in the initial phases of exploration) that arm~1 appears
  infeasible (i.e., $\hat{\mu}_2(1) > \tau$). However, eliminating
  arm~1 based on this observation is clearly risky. Instead, the
  \algo\ algorithm rejects an arm based on a more sophisticated
  criterion that captures the likelihood that it is actually the
  optimal arm (via estimates of information theoretic suboptimality
  gaps).}
%The performance implication of this (more sophisticated) rejection
%strategy is also explored later in our numerical experiments (see
%Figure~\ref{fig: csr_better}).

\ignore{In \cite{audibert-bubeck}, the
  arms are ordered based on their means and the arm with the highest
  empirical mean is dismissed at the end of each round. Here, we
  dismiss the arm with the highest empirical gap with respect to the
  optimal arm. For this purpose, we henceforth assume that the arms
  are ordered in increasing order of their gaps with respect to the
  optimal arm, i.e., for $i,j \in [K]$, if $i>j$, then $\Delta(1,i)
  \geq \Delta(1,j)$. In this ordering, ties are broken on the basis of
  the corresponding gap defined by $\delta(\cdot,\cdot)$. $A_k$
  denotes the set of arms that have survived till phase $k$. At the
  beginning of phase $k$, the algorithm pulls each surviving arm $n_k$
  number of times. Define $\hat{\delta}_{k}(\hat{J}(A),i)$ to be the
  empirical gap of arm $i$ after $k$ phases with respect to
  $\hat{J}(A)$, an empirically optimal arm from the set $A \subseteq
  [K]$. Let $\hat{J}_{T}([K])$ be the arm that survives at the end of
  $K-1$ phases. If this arm is empirically feasible, the algorithm
  recommends that arm. Else, the algorithm outputs 0, indicating that
  it considers the instance to be infeasible. The precise pseudocode
  is given in Algorithm \ref{algo: pairwise}.  } %end ignore

  \begin{algorithm}[tb]
  \caption{\algo\ algorithm}
   \label{algo: pairwise}
  \begin{algorithmic}[1]
    \Procedure{C-SR}{$T,K,\tau$}
    \State Let $A_1=\{1,\ldots,K\}$
    \State $\overline{\log}(K) := \frac{1}{2} + \sum_{i=2}^{K}\frac{1}{i}$ 
    \State $n_0 = 0,$ $n_k = \lceil \frac{1}{\overline{\log}(K) } \frac{T-K}{K+1-k} \rceil$ for $1 \leq k \leq K-1$
    %\For{$k=1,\ldots,K-1$} 
    %  \State  $n_k= \lceil \frac{1}{\overline{\log} K } \frac{n-K}{K+1-k} \rceil$.
    %  \EndFor
      \For{$k=1,\ldots,K-1$}
      	\State For each $i \in A_k$, pull arm $i$ $(n_k-n_{k-1})$ times
	\State Compute~$\hat{J}(A_k)$ (using~\eqref{eq:emp_opt_arm})
	\State Compute~$\hat\Delta\left(\hat{J}(A_k),i\right)$ for $i \in A_k$ (using~\eqref{eq:delta_hat})
	\State $ \hat{D}(A_k) = \{ \underset{i \in A_k, i \neq \hat{J}(A_k) }{\argmax} \hat{\Delta}(\hat{J}(A_k),i) \}$
	\If{$| \hat{D}(A_k) | <1$}
		\State $A_{k+1}=A_k \setminus  \hat{D}(A_k)$ 
	\Else 
		\State Compute $\mathcal{K}(\hat{D}(A_k))$
		\If{$\mathcal{K}(\hat{D}(A_k))^{\mathsf{c}} = \emptyset $}
			\State $A_{k+1}=A_k \setminus \{ \underset{i \in\mathcal{K}(\hat{D}(A_k)) }{\argmax} \hat\mu_1^k(i) \}$ 
		\Else 
			\State $A_{k+1}=A_k \setminus \{ \underset{i \in\mathcal{K}(\hat{D}(A_k))^{\mathsf{c}} }{\argmax} \hat\mu_2^k(i) \}$ 
		\EndIf 
        \EndIf 
     \EndFor
      \State Let $\hat{J}([K])$ be the unique element of $A_K$
      \If{$\hat{\mu}^{K-1}_{2}(\hat{J}_{T})> \tau$}
      	\State $\hat{F}([K])=$\texttt{False}
     \Else
     	\State $\hat{F}([K])=$\texttt{True}
      \EndIf
      \State \textbf{return} ($\hat{J}([K]),\hat{F}([K])$)
    \EndProcedure
  \end{algorithmic}
\end{algorithm}
  \noindent {\bf Performance evaluation:} We now characterize the
  performance of \algo. For the purpose of expressing our performance
  guarantee, we order the arm labels as follows (without loss of
  generality). Arm~1 is the optimal arm, and arms $2,\ldots,K$ are
  labelled in increasing order of $\Delta(1,\cdot),$ with ties broken
  in a manner that is consistent with the \algo\  algorithm. Formally,
  for any $1 < i < j \leq K,$
  %we either have $\Delta(1,i) < \Delta(i,j),$ or we have
  %
  if $\Delta(1,i) = \Delta(1,j),$ then either\\
  \noindent $\bullet$ $i,j \in \mathcal{K}(\nu)$ and $\mu_1(i) \leq \mu_1(j),$ or \\
  \noindent $\bullet$ $i \in \mathcal{K}(\nu)$ and $j \notin \mathcal{K}(\nu),$ or \\
  \noindent $\bullet$ $i,j \notin \mathcal{K}(\nu)$ and $\mu_2(i) \leq \mu_2(j).$\\ 

   \begin{theorem}
    \label{thm: ub}
    Under the \algo\  algorithm, the probability of error is upper bounded
    as: 
    $$e_T \leq c(K) \exp \left ( -\frac{\beta
      T}{H_2\ \overline{\log}(K)} \right ),$$ where $H_2=\underset{i
      \in [K], i \neq 1}{\max} \frac{i}{\Delta^2(1,i)},$ $c(K)$ is a
    function of~$K,$ and $\beta$ is a positive universal constant.
  \end{theorem}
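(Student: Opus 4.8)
The plan is to adapt the analysis of the Successive Rejects algorithm from \cite{audibert-bubeck} to the constrained setting, the main new ingredient being a combinatorial-plus-concentration argument that controls the probability that the genuinely optimal arm (arm~1) is rejected at any given phase. First I would decompose the error event. Since $A_K = \{1\}$ whenever arm~1 survives all $K-1$ phases, the recommended arm is correct on that event, and the only remaining source of error is a misclassification of the feasibility flag, which occurs iff $\hat\mu_2^{K-1}(1)$ lands on the wrong side of $\tau$. Hence
\[ e_T \le \sum_{k=1}^{K-1} \mathbb{P}(\text{arm 1 rejected in phase } k) + \mathbb{P}(\text{arm 1 survives but } \hat\mu_2^{K-1}(1) \text{ misclassifies feasibility}). \]
The second term requires $|\hat\mu_2^{K-1}(1) - \mu_2(1)| > |\tau - \mu_2(1)|$, so by \eqref{eqn: conc_inequality} it is at most $2\exp(-a_2 n_{K-1}(\tau-\mu_2(1))^2)$; since $\sqrt{a_2}|\tau-\mu_2(1)|$ dominates every $\Delta(1,i)$ through the $\min$ in the definition of $\Delta$, this term is governed by the same exponential rate as the phase terms.

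The core is bounding $\mathbb{P}(\text{arm 1 rejected in phase } k)$, and two observations drive this. Combinatorially, since only $k-1$ arms are removed before phase~$k$, the surviving set $A_k$ (of size $K-k+1$) must contain at least one arm from $\{K-k+1, \ldots, K\}$; call it $j^\star$. Because the arms are labelled in increasing order of $\Delta(1,\cdot)$, this arm satisfies $\Delta(1,j^\star) \ge \Delta(1, K-k+1) =: \Delta_{K-k+1}$, so a genuinely `bad' arm always survives alongside arm~1. Analytically, each $\delta(\cdot,\cdot)$ and $\Delta(\cdot,\cdot)$ is a fixed Lipschitz function (built from linear, $\min$, and $\max$ combinations of $\sqrt{a_1}$- and $\sqrt{a_2}$-scaled attribute differences) of the attribute values. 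I would therefore define the good event $\mathcal{E}_k$ that every surviving arm's estimates satisfy $|\hat\mu_l^k(i)-\mu_l(i)| < \eta_k$ for $l\in\{1,2\}$, with $\eta_k$ a small constant multiple of $\Delta_{K-k+1}/\sqrt{\max(a_1,a_2)}$, and argue that on $\mathcal{E}_k$ arm~1 cannot be the eliminated arm.

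The main obstacle is precisely this last claim, because the elimination decision is not made relative to arm~1 but relative to the empirically optimal arm $\hat{J}(A_k)$, and it is mediated by the tie-breaking rule. The argument I expect to need is a case analysis: on $\mathcal{E}_k$, either $\hat{J}(A_k) = 1$, in which case arm~1 is excluded from the argmax and trivially survives; or $\hat{J}(A_k) = j \ne 1$, in which case the accuracy of the estimates forces $j$ to be nearly optimal (its attributes are within $O(\eta_k)$ of arm~1's, since otherwise arm~1 would appear strictly better), whence $\hat\Delta(\hat{J}(A_k),1)$ is $O(\eta_k) < \Delta_{K-k+1}/2$ while $\hat\Delta(\hat{J}(A_k), j^\star) > \Delta_{K-k+1}/2$. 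Thus arm~1 does not attain the maximizing $\hat\Delta$, and even when a tie occurs at the feasibility boundary the tie-breaking rule (rejecting the most-infeasible, or else most-suboptimal, tied arm) never selects the genuinely optimal, genuinely feasible arm~1. This step is delicate because it must simultaneously handle the feasible/deceiver/infeasible configurations embedded in the definitions of $\delta$ and $\Delta$, and it is exactly here that the design choices of the algorithm are used.

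Finally I would assemble the bound. On $\mathcal{E}_k^{\mathsf{c}}$, a union bound over the at most $K$ surviving arms and the two attributes, together with \eqref{eqn: conc_inequality} at $n_k$ pulls and deviation $\eta_k \propto \Delta_{K-k+1}$, gives $\mathbb{P}(\text{arm 1 rejected in phase } k) \le 2K\exp(-\beta' n_k \Delta_{K-k+1}^2)$. Substituting $n_k \ge \frac{1}{\overline{\log}(K)}\frac{T-K}{K+1-k}$ and writing $m = K-k+1$, the exponent is at least $\beta' \frac{T-K}{\overline{\log}(K)}\cdot\frac{\Delta_m^2}{m} \ge \beta'\frac{T-K}{\overline{\log}(K)}\cdot\frac{1}{H_2}$, where the last inequality is the definition $H_2 = \max_{i\ge 2} \frac{i}{\Delta^2(1,i)}$. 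Summing the $K-1$ phase terms and the feasibility term, and absorbing the factor $K$, the $(T-K)$-versus-$T$ discrepancy, and $\beta'$ into $c(K)$ and a single universal constant $\beta$, yields $e_T \le c(K)\exp(-\beta T/(H_2\,\overline{\log}(K)))$, as claimed.
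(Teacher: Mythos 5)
Your outer skeleton is sound and essentially matches the paper's: the decomposition of $e_T$ into per-phase rejection events plus a terminal feasibility-misclassification term, the bound on that last term via $\sqrt{a_2}\,|\tau-\mu_2(1)| \geq \Delta(1,i)$, the combinatorial observation that some arm $j^\star$ with $\Delta(1,j^\star) \geq \Delta(1,K-k+1)$ survives into phase $k$, and the final substitution of $n_k$ and $H_2$ are all exactly as in the paper. Where you diverge is the per-phase bound: you package everything into a uniform good event $\mathcal{E}_k$ and claim that on it arm~1 cannot be eliminated, whereas the paper bounds the rejection probability directly through two-armed and three-armed lemmas (Lemmas~\ref{lemma:2arm-upper_bound} and~\ref{lemma:3arm-upper_bound}). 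That reorganization would be legitimate, but the justification you give for the central claim does not hold.

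The gap is your asserted separation: that when $\hat J(A_k) = j \neq 1$, accuracy forces $\hat\Delta(j,1) = O(\eta_k) < \Delta(1,K-k+1)/2 < \hat\Delta(j,j^\star)$. This is false in general because of the $\min$ in the definition of $\Delta$: every estimated gap satisfies $\hat\Delta(j,i) \leq \sqrt{a_2}\,|\tau - \hat\mu_2^k(j)|$, i.e., \emph{all} gaps are capped by the empirical feasibility margin of the empirically optimal arm itself. Consider an instance containing an arm $j$ with $\mu_1(j)$ slightly above $\mu_1(1)$ and $\mu_2(j)$ very close to $\tau$; such an arm has small $\Delta(1,j)$, so by the labelling convention it is \emph{not} among the worst $k$ arms, it typically survives to phase $k$, and on $\mathcal{E}_k$ it can perfectly well be the empirically optimal arm. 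Then $\sqrt{a_2}\,|\tau-\hat\mu_2^k(j)| = O(\eta_k)$, hence $\hat\Delta(j,j^\star) = O(\eta_k)$ as well, no matter how large $\delta(1,j^\star)$ is. In this configuration arm~1 and $j^\star$ (and possibly other arms) are generically \emph{tied} at the cap value, so the entire burden of the argument falls on the tie-breaking rule; this is not an edge case ``at the feasibility boundary'' but the typical situation for such instances. You assert that the tie-breaking rule ``never selects the genuinely optimal, genuinely feasible arm~1,'' but that is precisely the statement requiring proof, and it is where all the work lies: one must show that on $\mathcal{E}_k$, (a) $j^\star$ always belongs to the tie set $\hat D(A_k)$; (b) if $j^\star$ appears infeasible, the rule rejects an infeasible-looking arm, which cannot be arm~1 since arm~1 appears feasible on $\mathcal{E}_k$; and (c) if $j^\star$ appears feasible, then $\hat\mu_1^k(j^\star) > \hat\mu_1^k(1)$ (which needs a case analysis relating $j^\star$'s true type and its gap $\delta(1,j^\star) \geq \Delta(1,K-k+1)$ to its empirical attributes), so that arm~1 is not the most-suboptimal-looking tied arm. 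This case analysis --- splitting on the tie event versus the strict-inequality event, and on the feasible/deceiver/infeasible-suboptimal types of the arms involved --- is exactly the content of the paper's proof of Lemma~\ref{lemma:3arm-upper_bound}. Your good-event framing can very likely be pushed through, but as written the key step is unsupported, and what is missing is the heart of the proof.
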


  The main takeaways from Theorem~\ref{thm: ub} are as follows.

  \noindent $\bullet$ Theorem~\ref{thm: ub} provides an upper bound on
  the probability of error under \algo, that decays exponentially with
  the budget~$T.$ The associated decay rate is given by
  $\frac{\beta}{H_2\ \overline{\log}(K)},$ suggesting that the
  instance-dependent parameter $H_2$ captures the hardness of the
  instance (under the \algo\  algorithm); a larger value of $H_2$
  implies a `harder' instance, since the probability of error decays
  more slowly with the budget.

  \noindent $\bullet$ The `hardness index' $H_2$ agrees with the
  hardness index obtained for the classical SR algorithm
  in~\cite{audibert-bubeck} (also denoted $H_2$) for the unconstrained
  MAB problem when~$\tau \ra \infty.$

  \noindent $\bullet$ The decay rate from the upper bound for \algo\ 
  can be compared with that in the information theoretic lower bound
  conjectured in Section~\ref{sec: lb} (see Conjecture~\ref{thm: K
    arms lb}). Indeed, it can be proved that $\frac{H_2}{2} \leq H_1
  \leq \overline{\log}(K) H_2$ (see \cite{audibert-bubeck}). This
  suggests that the decay rate under \algo\  is optimal up to a factor
  that is logarithmic in the number of arms. In other words, this
  suggests \algo\  is nearly optimal.\footnote{The same logarithmic (in
  the number of arms) `gap' between the decay rate in the information
  theoretic lower bounds and that of the best known upper bound also
  exists in the (unconstrained, fixed budget) pure exploration MAB
  problem (see~\cite{audibert-bubeck,kaufmann16}).}

  \noindent {\bf Sketch of the proof of Theorem~\ref{thm: ub}:} In the
  remainder of this section, we sketch the proof of Theorem~\ref{thm:
    ub}. The
  complete proof can be found in Appendix~\ref{app: ub}. Note that
\begin{align*}
  e_T &=\mathbb{P} \left (  \left \{ J([K]) \neq  \hat{J}([K]) \right \} \cup \left \{ (\hat{F}([K]) \neq F([K]) \right \} \right ) \\
  &= \sum_{k=1}^{K-1} \prob{\textrm{Arm
      1 is dismissed in round }k} \\ 
      &+
  \prob{\hat{J}([K])=J([K]),\hat{F}([K])\neq F([K])}.%\\
  %&=:\sum_{k=1}^{K-1} \prob{\mathcal{A}_k} + \prob{\hat{J}_{T}=1,\hat{O}([K])=0}%\\
\end{align*}
Let $\mathcal{A}_k$ denote the event that arm~1 is rejected at the end
of round~$k.$ Noting that the event in the last term above implies
that the feasibility status of~arm~1 is estimated incorrectly at the
end of phase~$K-1,$ \eqref{eqn: conc_inequality} implies
\begin{align}
  e_T &\leq \sum_{k=1}^{K-1} \prob{\mathcal{A}_k} + 2 \exp\left(-a_2 n_{K-1} ( | \tau - \mu_2(1) |)^2 \right) \\
  &\leq \sum_{k=1}^{K-1} \prob{\mathcal{A}_k} + 2 \exp\left(-n_{K-1} \Delta^2(1,2) \right).
 \label{eq:error1-CSR}
\end{align}
We now bound $\prob{\mathcal{A}_k}.$ In round $k$, at least one of the
$k$ `worst' arms (according to the ordering defined on the arms)
survives (i.e., belongs to $A_k$). Thus, for arm 1 to be dismissed at
the end of round $k,$ it must appear empirically `worse' than this
arm. Formally, we have
\begin{align*}
  \prob{\mathcal{A}_k} &\leq \sum_{j=K-k+1}^{K} \prob{\hat{J}(A_k) = j}
 \\ &+ \sum_{i=2}^{K-k}\sum_{j=K-k+1}^{K} \prob{\hat{J}(A_k) = i, \hat{\delta}_k(i,j) \leq \hat{\delta}_k(i,1)} \\
  &=: S_1 + S_2.
\end{align*}
The summation~$S_1$ above corresponds to the event that one of the
worst $k$ arms looks empirically optimal at the end of phase~$k.$ On
the other hand, the summation~$S_2$ corresponds to the event that some
other arm~$i$ (not among the worst~$k$ arms) looks empirically optimal
at the end of phase~$k,$ and further that arm~1 has a greater
(estimated) gap (relative to~$i$) than an arm~$j,$ which is among the
worst~$k$ arms (this is necessary for the elimination of
arm~1.). Crucially, the terms in~$S_1$ can be bounded by analysing a
\emph{two-armed} instance consisting only of arms~1 and~$j.$
Similarly, the terms in~$S_3$ can be bounded by analysing a
\emph{three-armed} instance consisting only of arms~1,~$i$ and~$j.$
The relevant bounds are summarized below.
\begin{lemma}
  \label{lemma:2arm-upper_bound}
  Consider a two-armed instance where the arms are labelled (without
  loss of generality) as per the convention described before. Under
  \algo, the probability that arm~2 is optimal after phase~1 is at
  most $c_2 \exp\left(-\beta_2 n_1 \Delta^2(1,2)\right),$ where
  $c_2,$ $\beta_2$ are universal positive constants.
\end{lemma}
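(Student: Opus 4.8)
The plan is to reduce the event ``arm~2 is optimal after phase~1'' to the event $\{\hat{J}(A_1) = 2\}$ and then bound the latter by a case analysis on the true feasibility/suboptimality type of arm~2. First I would observe that a two-armed instance ($K=2$) has a single phase, after which each arm has been pulled $n_1$ times. Since $A_1 \setminus \{\hat{J}(A_1)\}$ is a singleton, the tie-breaking rule of \algo\ is vacuous here and the rejected arm is exactly this single arm; hence the recommended (surviving) arm equals $\hat{J}(A_1)$, and arm~2 is recommended if and only if $\hat{J}(A_1) = 2$. It therefore suffices to upper bound $\prob{\hat{J}(A_1) = 2}$, where $\hat{J}(A_1)$ is computed via~\eqref{eq:emp_opt_arm}.

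Next I would split according to the four cases defining $\delta(1,2)$: (i) both arms feasible, (ii) arm~1 feasible and arm~2 a deceiver, (iii) arm~1 feasible and arm~2 infeasible suboptimal, and (iv) both arms infeasible. In each case the key step is to write $\{\hat{J}(A_1) = 2\}$ as a subset of a union of a few ``estimation failure'' events, each forcing some attribute estimate to deviate from its true value by at least one of the gaps appearing in $\Delta(1,2)$. Unwinding~\eqref{eq:emp_opt_arm}: if the empirically feasible set is nonempty, then $\hat{J}(A_1) = 2$ forces arm~2 to appear feasible ($\hat{\mu}^1_2(2) \le \tau$) and, when arm~1 also appears feasible, $\hat{\mu}^1_1(2) < \hat{\mu}^1_1(1)$; if the empirically feasible set is empty, then $\hat{J}(A_1) = 2$ forces $\hat{\mu}^1_2(1) > \tau$ and $\hat{\mu}^1_2(2) \le \hat{\mu}^1_2(1)$. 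For instance, in case~(i) this yields $\{\hat{J}(A_1) = 2\} \subseteq \{\hat{\mu}^1_2(1) > \tau\} \cup \{\hat{\mu}^1_1(2) < \hat{\mu}^1_1(1)\}$.

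I would then bound each event via~\eqref{eqn: conc_inequality}, using a standard midpoint union bound for the ``crossing'' events where two estimates swap order (splitting $\hat{\mu}^1_1(2) < \hat{\mu}^1_1(1)$ into the two events that one of the deviations is at least half the gap $\mu_1(2) - \mu_1(1)$). The resulting exponents are of the form $a_2(\tau - \mu_2(1))^2$, $a_2(\mu_2(2)-\tau)^2$, $a_1(\mu_1(2)-\mu_1(1))^2$, or $a_2(\mu_2(2)-\mu_2(1))^2$ (possibly divided by $4$), and by the definitions of $\delta(1,2)$ and $\Delta(1,2) = \min\{\sqrt{a_2}\,|\tau - \mu_2(1)|, \delta(1,2)\}$ each such exponent is at least $\Delta^2(1,2)$. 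Collecting the finitely many exponential terms, each dominated by $\exp(-\tfrac14 n_1 \Delta^2(1,2))$, gives the claim with universal constants (e.g.\ $\beta_2 = \tfrac14$ and $c_2 = 6$).

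The main obstacle I expect is the exhaustive bookkeeping: correctly characterizing \emph{all} the ways arm~2 can be declared optimal under the two-branch rule of~\eqref{eq:emp_opt_arm}, and checking in each case that every estimation-failure event carries an exponent at least $\Delta^2(1,2)$. Two cases are delicate. In case~(iii), $\delta(1,2)$ is a maximum, so the relevant ``feasible branch'' event is the \emph{intersection} that arm~2 appears feasible \emph{and} the objective estimates cross; its probability must be bounded by the \emph{minimum} of the two marginal bounds in order to recover the larger exponent $\delta^2(1,2)$ rather than the smaller. In case~(iv), one must exploit the geometry $\mu_2(1) > \tau$ to see that $\mu_2(2) - \tau > \mu_2(2) - \mu_2(1)$, so that the ``arm~2 appears feasible'' term is controlled by the infeasibility gap that dominates $\Delta(1,2)$.
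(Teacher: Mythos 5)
Your proposal is correct and takes essentially the same approach as the paper: both reduce the event to arm~1 being rejected after the single phase, split according to the empirical feasibility configuration and the true type of arm~2 (feasible suboptimal, deceiver, infeasible suboptimal, or infeasible instance), and bound each resulting estimation-failure event via~\eqref{eqn: conc_inequality} with a midpoint union bound, handling the max-gap case by taking the better of the two marginal bounds. The only cosmetic difference is in the infeasible-instance case, where the paper covers the error event with $\{\hat\mu_2^1(1) \leq \tau\} \cup \{\hat\mu_2^1(2) \leq \hat\mu_2^1(1)\}$ while you use $\{\hat\mu_2^1(2) \leq \tau\} \cup \{\hat\mu_2^1(2) \leq \hat\mu_2^1(1)\}$; both yield the claimed exponent.
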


\begin{lemma}
  \label{lemma:3arm-upper_bound}
  Consider a three-armed instance where the arms are labelled (without
  loss of generality) as per the convention described before. Under
  \algo, the probability that after phase~1, arm 2 is empirically optimal and arm~1 is rejected is at
  most $c_3 \exp\left(-\beta_3 n_1 \Delta^2(1,3)\right),$ where $c_3,$
  $\beta_3$ are universal positive constants.
\end{lemma}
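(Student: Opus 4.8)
The plan is to bound the probability of the event $\{\hat{J}(A_1) = 2,\ \text{arm 1 is rejected}\}$ in a three-armed instance by conditioning on how the attribute estimates of the three arms concentrate around their true values. The labelling convention guarantees $\Delta(1,2) \leq \Delta(1,3)$, so the bound we want is governed by the \emph{larger} gap $\Delta(1,3)$; intuitively, for arm~1 to be rejected in favour of some other arm being empirically optimal, a relatively large deviation in \emph{several} estimates must occur simultaneously, and the rate is controlled by the hardest such deviation. The first step is to recall that for arm~1 to be rejected at the end of phase~1 while arm~2 looks optimal, we must have (i) $\hat{J}(A_1) = 2$, i.e.\ arm~2 appears optimal, and (ii) arm~1 is the arm the tie-breaking/$\hat\Delta$-maximization rule selects for elimination, which via Algorithm~\ref{algo: pairwise} requires $\hat\delta(2,1) \geq \hat\delta(2,3)$ together with the appropriate tie-break going against arm~1. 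I would translate (i)--(ii) into explicit inequalities on the six estimates $\hat\mu^1_j(i)$, $i\in\{1,2,3\}$, $j\in\{1,2\}$.

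The core of the argument is a case analysis on the feasibility configuration of the three arms (which of arms~1,~2,~3 are feasible, deceivers, or suboptimal-infeasible), since both the definition of $\delta$ and the selection rule branch on whether each $\hat\mu^1_2(\cdot)$ is above or below $\tau$. In each case I would identify a single attribute deviation — for instance $|\hat\mu^1_1(1) - \mu_1(1)|$, $|\hat\mu^1_2(1) - \mu_2(1)|$, $|\hat\mu^1_1(3) - \mu_1(3)|$, or $|\hat\mu^1_2(3) - \mu_2(3)|$ — that must exceed a constant fraction of $\Delta(1,3)$ for the bad event to occur, then apply the concentration inequality~\eqref{eqn: conc_inequality} with $n = n_1$ to obtain a term of the form $2\exp(-\beta n_1 \Delta^2(1,3))$. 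The key structural fact I would lean on is that $\Delta(1,3) = \min\{\sqrt{a_2}|\tau-\mu_2(1)|,\ \delta(1,3)\}$, so any deviation large enough to flip the relevant comparison — whether it concerns arm~1's feasibility gap or the pairwise gap $\delta(1,3)$ — is automatically at least $\Omega(\Delta(1,3))$. A union bound over the finitely many (at most a constant number of) estimates and cases then collapses everything to $c_3\exp(-\beta_3 n_1 \Delta^2(1,3))$ with universal constants.

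I expect the main obstacle to be the step where arm~2's own estimates enter. The event $\hat{J}(A_1)=2$ involves arm~2's attributes, and naively one would want to say that since $\Delta(1,2)\le\Delta(1,3)$, controlling arm~2 is ``cheap''; but the danger is a scenario where arm~2 legitimately looks optimal (small deviation in arm~2) yet arm~1 is rejected in favour of arm~3 only because of a deviation scaled by the \emph{small} gap $\Delta(1,2)$, which would give the weaker rate $\exp(-\beta n_1\Delta^2(1,2))$ and break the lemma. The crux is therefore to show that whenever arm~1 is the eliminated arm (not arm~3), the binding deviation is necessarily tied to $\Delta(1,3)$ and not $\Delta(1,2)$ — this is exactly where the $\hat\Delta$-based elimination rule and the tie-breaking convention must be used carefully, and I would organize the case analysis so that in every configuration the inequality forcing arm~1 out is compared against the arm~3 quantities, thereby pinning the exponent to $\Delta^2(1,3)$. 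Handling the boundary/tie cases in the selection rule consistently with the labelling convention is where most of the technical care will be needed.
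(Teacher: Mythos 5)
Your plan is correct and follows essentially the same route as the paper: a case analysis on the arm types, using the $\hat\Delta$-maximization and tie-breaking rules to show that whenever arm~1 (rather than arm~3) is eliminated while arm~2 appears optimal, some estimate must deviate at scale $\delta(1,3)\geq\Delta(1,3)$ (or arm~1's feasibility status must be misestimated, at scale $\sqrt{a_2}\,|\tau-\mu_2(1)|\geq\Delta(1,3)$), followed by concentration and a union bound. The crux you flag --- pinning the exponent to $\Delta^2(1,3)$ rather than $\Delta^2(1,2)$ by forcing the binding comparison onto arm-3 quantities --- is exactly how the paper bounds its events $\G_1$ and $\G_2$, the only cosmetic difference being that in one subcase (arm~3 a deceiver, arm~2 suboptimal) the paper uses a three-way deviation with aggregate gap $\delta(1,2)+\delta(1,3)\geq\delta(1,3)$ rather than a single-estimate deviation.
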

Using Lemmas~\ref{lemma:2arm-upper_bound}
and~\ref{lemma:3arm-upper_bound} (proofs in Appendix~\ref{app: ub}),
$\prob{\mathcal{A}_k}$ can be upper bounded as follows:
\begin{align*}
  \prob{\mathcal{A}_k} &\leq k c_2 \exp\left(-\beta_2 n_k
  \Delta^2(1,K-k+1) \right) \\&+ k(K-k-1) c_3 \exp\left(-\beta_3 n_k
  \Delta^2(1,K-k+1) \right) \\
  &\leq K^2 \ \tilde{c} \exp\left(-\tilde{\beta} n_k \Delta^2(1,K-k+1) \right),
\end{align*}
where $\tilde{c} = \max(c_2,c_3)$ and $\tilde\beta =
\min(\beta_2,\beta_3).$ Finally, substituting the above bound
into~\eqref{eq:error1-CSR}, we get
\begin{align*}
  e_T &\leq \sum_{k=1}^{K-1} K^2 \ \tilde{c} \exp\left(-\tilde{\beta} n_k \Delta^2(1,K-k+1) \right) \\
  &+ 2 \exp\left(- n_{K-1} \Delta^2(1,2) \right) \\
  & \leq (K^3 \tilde{c} + 2) \exp\left(-\hat\beta \min_{1 \leq k \leq K-1} \left(n_k \Delta^2(K-k+1) \right)\right),
\end{align*}
where $\hat\beta = \min(\tilde\beta,1).$ Now, using the definition
of $n_k,$ 
\begin{align*}
&\min_{1 \leq k \leq K-1} \left(n_k \Delta^2(1,K-k+1) \right) \\
&\quad \geq \min_{1 \leq k \leq K-1} \left( \frac{T-K}{\overline{\log}(K)}
\frac{\Delta^2(K-k+1)}{K-k+1}\right) \geq
\frac{T-K}{\overline{\log}(K)} \frac{1}{H_2},
\end{align*} 
which implies the
statement of the theorem.

%%%%%%%%%%%%%%%%%%%%%%%%%%%%%%%%%%%%%%%%%%%%%%
\ignore{
  
Note that to bound each term in \ref{eqn: A_k decomposition}, it is
enough to characterize this for a three-armed instance consisting of
arm 1, arm $\hat{J}(A_k)$ and arm $j$. Such an instance would have arm
1 as the optimal arm and arm $j$ as the worst arm. The probability of
the event described by each term in \ref{eqn: A_k decomposition} is
equal to the probability of dismissing arm 1 at the end of round 1
using the \algo\  algorithm on the three-armed instance consisting of
arm 1, arm $\hat{J}(A_k)$ and arm $j$, and arm 1 is empirically
feasible at the end of round 1, the only difference being that the
arms would have been sampled $n_k$ times in the former case.

Let $e_{i,j,n}$ denote the probability of dismissing arm $i$ at the
end of round $j$ when each surviving arm has been sampled $n$
times. Let $\mathcal{E}_{1}, \mathcal{E}_{2}, \mathcal{E}_{3}$ denote
the set of instances where the ``worst" arm, i.e., the arm with the
largest gap, is feasible suboptimal, deceiver and infeasible
suboptimal respectively. The following lemma characterizes $e_{1,1,n}$
for a three-armed instance.
\begin{lemma} \label{lem: K3}
When $K=3$, to characterize $e_{T,1,1,n}$ using the \algo\  algorithm on
any instance, it is enough to characterize it over each of
$\mathcal{E}_1, \mathcal{E}_2$,and $\mathcal{E}_3$. Moreover, on
$\mathcal{E}_1, \mathcal{E}_2$ and $\mathcal{E}_3$, there are
universal positive constants $c_3, c_4$ such that
\begin{align}
    e_{1,1,n} \leq c_3  \exp \left (-c_4 n \delta(1,3)^2 \right ). \label{feas_final_K3}
\end{align}
\end{lemma}
\begin{proof}
Consider an instance $\nu \in \mathcal{E}_1$. The analysis for $\nu
\in \mathcal{E}_2$ and $\nu \in \mathcal{E}_3$ are similar and can be
found in <\texttt{REF TO SECTION}>. Note that for arm 1 to be
empirically feasible and to be dismissed at the end of round 1,
$\hat{\mu}_{1,n}(1)$ has to be greater than $\hat{\mu}_{1,n}(3)$. We
take 2 cases: arm 3 being empirically feasible or infeasible. If arm 3
is empirically feasible, the bound follows from \eqref{eqn:
  conc_inequality}. If arm 3 is empirically infeasible and if arm 1 is
to be dismissed, arm 3 becomes an empirically infeasible suboptimal
arm and thus $\hat{\mu}_{1,n}(3)$ would again have to be smaller than
$\hat{\mu}_{1,n}(1)$ as,
\begin{align*}
\left \{ \hat{\Delta}(\hat{J}(A_1),1)>\hat{\Delta}(\hat{J}(A_1),3)
\right \} \subseteq \left \{
\hat{\delta}(\hat{J}(A_1),1)>\hat{\delta}(\hat{J}(A_1),3) \right \}
\subseteq \left \{ \hat{\mu}_{1,n}(1) > \hat{\mu}_{1,n}(3)\right \},
\end{align*}
where the above events are for the case where arm 3 is empirically
infeasible suboptimal.  Thus, we have:
\begin{align*}
   \mathbb{P} \left ( \left \{ \hat{\delta}_{1}(\hat{J}(A_1),1) > \hat{\delta}_{1}(\hat{J}(A_1),j) \right \} \cap \mathcal{B}_1 \right ) &\leq  \mathbb{P}(\hat{\mu}_{1,n}(1) > \hat{\mu}_{1,n}(3) ).
\end{align*}
Using the appropriate concentration inequality from \eqref{eqn: conc_inequality}, we get \eqref{feas_final_K3}. 
\end{proof}
%%%%%%%%%%%%%%%%%%%%%%%%%%%%%%%%
\begin{comment}
Consider an instance $\nu \in \mathcal{E}_2$, i.e., the ``worst" arm
is a deceiver arm. Similar to the last case, we have
\eqref{K3_E1_case1}.

The first term in \eqref{K3_E1_case1} can be bounded using an
appropriate concentration inequality. To bound the second term, we
take two cases: arm 3 is empirically infeasible and arm 3 is
empirically feasible. The latter case can easily be bounded using an
appropriate concentration inequality. In the former case, arm 3 is
empirically infeasible, arm 1 is empirically feasible and arm 1 is
removed. Note that this constrains arm 2 to be empirically
feasible. Arm 1 can be removed only if its empirical gap is greater
than that of arm 3. Thus, we can bound \eqref{K3_E1_case1} as:
\begin{align*}
    \mathbb{P} \left ( \mathcal{A}_1  \right ) &\leq \mathbb{P}(\hat{\mu}_1(1)-\hat{\mu}_2(1) > \hat{\mu}_1(K)-\tau ).
\end{align*}
Using the appropriate concentration inequality from \eqref, we get
\eqref{feas_final_K3}.

Consider an instance $\nu \in \mathcal{E}_3$, i.e., the ``worst" arm
is a suboptimal deceiver arm. Similar to the last two cases, we have
\eqref{K3_E1_case1}.

The first term in \eqref{K3_E1_case1} can be bounded using an
appropriate concentration inequality. To bound the second term, note
that both the approaches used in the last two sections work, i.e., in
each dimension we get a bound in terms of arm 3's means in that
dimension. Hence, we can bound the second term by a minimum of those
two bounds, thus giving \eqref{feas_final_K3}.
\end{comment}
%%%%%%%%%%%%%%%%%%%%%%%%%%%%%%
Using Lemma \ref{lem: K3} in \eqref{eqn: A_k decomposition}, for some
positive constants $c_5, c_6$, we get the following upper bound for
\eqref{eqn: error_prob_main}:
\begin{align*}
 \sum_{k=1}^{K-1} \mathbb{P}( \mathcal{A}_k ) 
 & \leq \sum_{k=1}^{K-1} \sum_{j=K-k+1}^{K} c_5 \exp \left (-c_6 n_k \Delta(1,j)^2 \right ) \\
 & \leq \sum_{k=1}^{K-1} c_5 k \exp \left ( -c_6 n_k \Delta(1,K-k+1)^2 \right ). \numberthis \label{eqn: arm_dismiss}
\end{align*}
From the definition of $n_k$ and $H$, we have
\begin{align*}
n_k \Delta(1,K-k+1)^2 \geq \frac{n-K}{\overline{\log (K)}}
\frac{1}{(K+1-k) \Delta(1,K-k+1)^{-2} } \geq \frac{n-K}{\overline{\log
    (K) H}}.
\end{align*}
Substituting this back in \eqref{eqn: arm_dismiss} gives a bound of the form: 
\begin{align*}
\mathbb{P}(\text{Arm 1 is dismissed in an intermediate round}) \leq c_7 \exp \left ( -c_8 (T-K)/H \right), 
\end{align*}
for some positive universal constants $c_7, c_8$. Using this bound in
\eqref{eqn: ub overall} gives the result. Note that the bound given by
the concentration inequality for the second term in \eqref{eqn: ub
  overall} is smaller than the bound for the first term for a suitably
large value for $c_7$ as the rate of decay of the former contains only
$(\tau-\mu_2(i))$ while the rate of decay of the latter containes
$\Delta(\cdot,\cdot)$.

} %end ignore
%%%%%%%%%%%%%%%%%%%%%%%%%%%%%%%%%%%%%%%%%%%%%%

\section{Numerical experiments}
\label{sec: numerics}
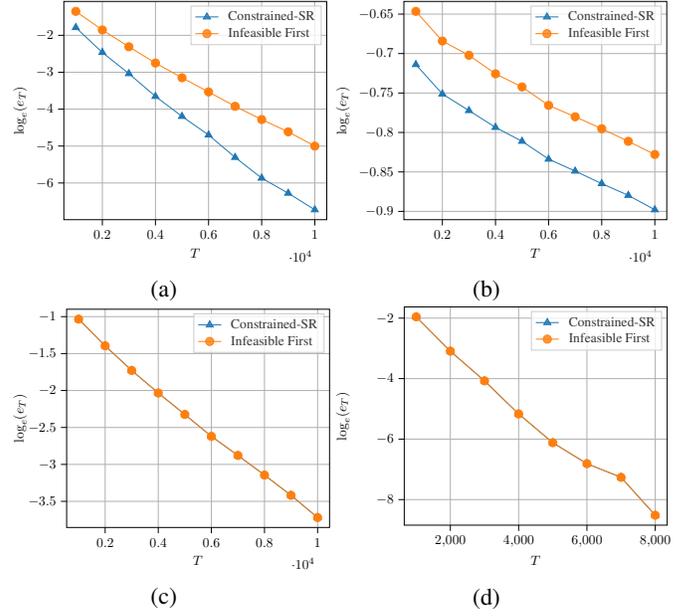
\begin{figure}[t]
     \centering
     \begin{subfigure}{0.49\linewidth}
         \centering
 	\scalebox{0.85}{% This file was created with tikzplotlib v0.10.1.
\begin{tikzpicture}[scale=0.6]

\definecolor{darkgray176}{RGB}{176,176,176}
\definecolor{darkorange25512714}{RGB}{255,127,14}
\definecolor{lightgray204}{RGB}{204,204,204}
\definecolor{steelblue31119180}{RGB}{31,119,180}

\begin{axis}[
legend cell align={left},
legend style={fill opacity=0.8, draw opacity=1, text opacity=1, draw=lightgray204},
tick align=outside,
tick pos=left,
x grid style={darkgray176},
xlabel={$T$},
xmajorgrids,
xmin=550, xmax=10450,
xtick style={color=black},
y grid style={darkgray176},
ylabel={\(\displaystyle \log_{e}(e_T)\)},
ymajorgrids,
ymin=-6.99416097789813, ymax=-1.0821613522794,
ytick style={color=black}
]
\addplot [semithick, steelblue31119180, mark=triangle*, mark size=3, mark options={solid}]
table {%
1000 -1.78958184198028
2000 -2.46392824340281
3000 -3.0380136658106
4000 -3.6546712827842
5000 -4.19571305661039
6000 -4.69948086545933
7000 -5.3063495382453
8000 -5.86747856732699
9000 -6.27648350214028
10000 -6.72543372218818
};
\addlegendentry{Constrained-SR}
\addplot [semithick, darkorange25512714, mark=*, mark size=3, mark options={solid}]
table {%
1000 -1.35088860798934
2000 -1.85489098066817
3000 -2.30991186838051
4000 -2.75027943531995
5000 -3.14888345304817
6000 -3.53324416258249
7000 -3.92308395278757
8000 -4.28163846064261
9000 -4.61421093064024
10000 -5.0011801353255
};
\addlegendentry{Infeasible First}
\end{axis}

\end{tikzpicture}}
          \caption{}
         \label{fig: csr_better}
     \end{subfigure}
     \hfill%
     \begin{subfigure}{0.49\linewidth}
         \centering
  	\scalebox{0.85}{% This file was created with tikzplotlib v0.10.1.
\begin{tikzpicture}[scale=0.6]

\definecolor{darkgray176}{RGB}{176,176,176}
\definecolor{darkorange25512714}{RGB}{255,127,14}
\definecolor{lightgray204}{RGB}{204,204,204}
\definecolor{steelblue31119180}{RGB}{31,119,180}

\begin{axis}[
legend cell align={left},
legend style={fill opacity=0.8, draw opacity=1, text opacity=1, draw=lightgray204},
tick align=outside,
tick pos=left,
x grid style={darkgray176},
xlabel={$T$},
xmajorgrids,
xmin=550, xmax=10450,
xtick style={color=black},
y grid style={darkgray176},
ylabel={\(\displaystyle \log_{e}(e_T)\)},
ymajorgrids,
ymin=-0.910532178096705, ymax=-0.633939314136687,
ytick style={color=black}
]
\addplot [semithick, steelblue31119180, mark=triangle*, mark size=3, mark options={solid}]
table {%
1000 -0.713921479776422
2000 -0.751221307654977
3000 -0.772233678880729
4000 -0.793564379012418
5000 -0.811142738797653
6000 -0.833766490369547
7000 -0.848935867855923
8000 -0.864813704366568
9000 -0.87966954841851
10000 -0.897959775189432
};
\addlegendentry{Constrained-SR}
\addplot [semithick, darkorange25512714, mark=*, mark size=3, mark options={solid}]
table {%
1000 -0.646511717043961
2000 -0.684127976139785
3000 -0.702167743781028
4000 -0.725670372265505
5000 -0.742295408826676
6000 -0.765653359346824
7000 -0.78010037742389
8000 -0.7952463535154
9000 -0.811210255423648
10000 -0.827867851524258
};
\addlegendentry{Infeasible First}
\end{axis}

\end{tikzpicture}}

         \caption{}
         \label{fig: csr_factor}
     \end{subfigure}
     
     \begin{subfigure}{0.49\linewidth}
         \centering
 	\scalebox{0.85}{% This file was created with tikzplotlib v0.10.1.
\begin{tikzpicture}[scale=0.6]

\definecolor{darkgray176}{RGB}{176,176,176}
\definecolor{darkorange25512714}{RGB}{255,127,14}
\definecolor{lightgray204}{RGB}{204,204,204}
\definecolor{steelblue31119180}{RGB}{31,119,180}

\begin{axis}[
legend cell align={left},
legend style={fill opacity=0.8, draw opacity=1, text opacity=1, draw=lightgray204},
tick align=outside,
tick pos=left,
x grid style={darkgray176},
xlabel={$T$},
xmajorgrids,
xmin=550, xmax=10450,
xtick style={color=black},
y grid style={darkgray176},
ylabel={\(\displaystyle \log_{e}(e_T)\)},
ymajorgrids,
ylabel shift=-0.2cm,
ymin=-3.85360253129814, ymax=-0.899797397909164,
ytick style={color=black}
]
\addplot [semithick, steelblue31119180, mark=triangle*, mark size=3, mark options={solid}]
table {%
1000 -1.03406126760866
2000 -1.39517366572649
3000 -1.72833406737796
4000 -2.03355081491475
5000 -2.32401304040767
6000 -2.62031408679812
7000 -2.87883852208249
8000 -3.14446432735439
9000 -3.41854701999727
10000 -3.71933866159864
};
\addlegendentry{Constrained-SR}
\addplot [semithick, darkorange25512714, mark=*, mark size=3, mark options={solid}]
table {%
1000 -1.03406126760866
2000 -1.39517366572649
3000 -1.72833406737796
4000 -2.03355081491475
5000 -2.32401304040767
6000 -2.62031408679812
7000 -2.87883852208249
8000 -3.14446432735439
9000 -3.41854701999727
10000 -3.71933866159864
};
\addlegendentry{Infeasible First}
\end{axis}

\end{tikzpicture}}
          \caption{}
         \label{fig: both_same}
     \end{subfigure}
     \hfill %
     \begin{subfigure}{0.49\linewidth}
         \centering
  	\scalebox{0.85}{% This file was created with tikzplotlib v0.10.1.
\begin{tikzpicture}[scale=0.6]

\definecolor{darkgray176}{RGB}{176,176,176}
\definecolor{darkorange25512714}{RGB}{255,127,14}
\definecolor{lightgray204}{RGB}{204,204,204}
\definecolor{steelblue31119180}{RGB}{31,119,180}

\begin{axis}[
legend cell align={left},
legend style={fill opacity=0.8, draw opacity=1, text opacity=1, draw=lightgray204},
tick align=outside,
tick pos=left,
unbounded coords=jump,
x grid style={darkgray176},
xlabel={$T$},
xmajorgrids,
xmin=650, xmax=8350,
xtick style={color=black},
y grid style={darkgray176},
ylabel={\(\displaystyle \log_{e}(e_T)\)},
ymajorgrids,
ymin=-8.84520935152247, ymax=-1.62885382918543,
ytick style={color=black},
every axis y label/.style={
   at={(-0.19,0.5)},rotate=90,anchor=near ticklabel}
]
\addplot [semithick, steelblue31119180, mark=triangle*, mark size=3, mark options={solid}]
table {%
1000 -1.95686998929166
2000 -3.09224317393483
3000 -4.07454193492592
4000 -5.16728910414163
5000 -6.11929791861787
6000 -6.81244509917781
7000 -7.26443022292087
8000 -8.51719319141624
9000 -inf
10000 -inf
};
\addlegendentry{Constrained-SR}
\addplot [semithick, darkorange25512714, mark=*, mark size=3, mark options={solid}]
table {%
1000 -1.95686998929166
2000 -3.09224317393483
3000 -4.07454193492592
4000 -5.16728910414163
5000 -6.11929791861787
6000 -6.81244509917781
7000 -7.26443022292087
8000 -8.51719319141624
9000 -inf
10000 -inf
};
\addlegendentry{Infeasible First}
\end{axis}

\end{tikzpicture}}

         \caption{}
         \label{fig: sim_infeasible}
     \end{subfigure}
        \caption{The numerical performance of IF and \algo\  is shown on three different feasible instances in panels~(a), (b), (c) and on an infeasible instance in panel~(d) . Note that the probability of error decays exponentially with the horizon in all four cases. Panels (b) and (c) show that IF and \algo\  have similar performance on those instances, while in Panel (a), the decay rate of \algo\  is higher.
        }
        \label{fig: sims}
\end{figure}
In this section, we present the results of simulations that show the
performance of the \algo\ algorithm. We consider 2-dimensional jointly
Gaussian arms with the covariance matrix $\begin{bmatrix} 1 & 0.5
  \\ 0.5 & 1
\end{bmatrix}$ and attributes as defined in
Section~\ref{sec: lb}. We compare the performance of \algo\ with that
of Infeasible First (IF), which also follows a Successive Rejects
based framework but differs from \algo\ in the way arms are
rejected. In round~$k$, IF removes the arm with the highest empirical
constraint attribute (i.e., the most infeasible looking arm) if $A_k$
contains infeasible looking arms, and otherwise removes the arm with
the highest empirical objective attribute (i.e., the arm that looks
like the most suboptimal feasible arm). See Appendix~\ref{app: IF} for
a formal description of this algorithm.

In the first instance, the mean vectors of the arms are $[1\ 0.95]^T,$
$[5\ 0.001]^T,$ and $[10\ 0.001]^T$. The threshold $\tau$, which is
the upper bound for the mean of the second dimension, is fixed
at~1. Thus, arm~1 is optimal, and arms~2 and~3 are feasible
suboptimal. This instance is motivated by the scenario described in
Figure~\ref{fig: algo_if}. The second instance that we consider is
feasible and has three arms with the mean vectors $[1\ 0.995]^T$,
$[2\ 1.005]^T$ and $[12\ 0.001]^T$ with $\tau=1$. Thus, arm 1 is
optimal, arm 2 is a deceiver and arm 3 is feasible suboptimal. The
third instance that we consider is also feasible and has four arms
with the mean vectors $[0.3\ 0.45]^T$, $[0.35\ 0.45]^T$,
$[0.2\ 0.8]^T$ and $[0.5\ 0.8]^T$ and $\tau=0.5$. Thus, arm 1 is
optimal, arm 2 is feasible suboptimal, arm 3 is a deceiver and arm 4
is infeasible suboptimal. The fourth instance that we consider is
infeasible and has four arms with the mean vectors $[0.3\ 1.6]^T,$
$[0.4\ 1.7]^T,$ $[0.2\ 1.1]^T,$ and $[0.5\ 1.2]^T$ and $\tau=1$. Thus,
arm 3 is the optimal arm for this instance. The results of the
simulations for each of these instances can be found in
Figures~\ref{fig: csr_better}, \ref{fig: csr_factor}, \ref{fig:
  both_same} and \ref{fig: sim_infeasible} respectively.

The algorithms were run for horizons up to 10000 and averaged over
100000 runs for the feasible instances and over 10000 runs for the
infeasible instance.  Empirical averages were used as the attribute
estimators. Figure~\ref{fig: sims} shows the variation of
$\log_{e}(e_T)$ with the horizon $T$ for these four instances. Note
that the slope of this curve captures the (exponential) decay rate of the
probability of error.
%As expected, the probability of error decays exponentially with the
%horizon for both algorithms.
In the case of the infeasible instance (Figure~\ref{fig:
  sim_infeasible}), the performance is nearly the same. In
Figures~\ref{fig: csr_factor} and~\ref{fig: both_same}, we once again
observe that the decay rates of \algo\ and IF are identical; the
probability of error under \algo\ appears to be smaller than that
under IF by a constant factor in Figure~\ref{fig:
  csr_factor}. However, in Figure~\ref{fig: csr_better},
\algo\ demonstrates a superior decay rate, since it employs a more
sophisticated elimination criterion using gaps inspired by the
two-armed lower bound (as noted in Section~\ref{sec: algo}).

\section{Concluding Remarks}
\label{sec:conclusion}

This work motivates follow-ups in several directions. On the
theoretical front, the main gap in this work pertains to the
information theoretic lower bound. Proving Conjecture~\ref{thm: K arms
  lb} would not only establish the `near' optimality of the
\algo\ algorithm, but also, quite likely, introduce a novel approach
for deriving lower bounds in the fixed budget pure exploration
setting. On the application front, the present work motivates an
extensive case study applying the proposed algorithm in various
application scenarios.

This work also motivates generalizations to constrained reinforcement
learning, where the goal is to identify the optimal policy that
fulfills additional constraints.

%\section{Acknowledgments}

\bibliography{main}
%\newpage
\appendix
\section{Proof of Theorem \ref{thm: 2 arms lb}} \label{app: 2_arm_lb_proof}

\begin{proof}
With some abuse of notation, we denote by $(J(\nu),F(\nu))$ the
correct output for instance $\nu$. Consider any alternative bandit
model $\nu^{\prime}=\left(\nu^{\prime}(1), \nu^{\prime}(2)\right)$
such that its correct output, $(J(\nu^{\prime}), F(\nu^{\prime})) \neq
(J(\nu),F(\nu))$. Let $\mathcal{L}$ be a consistent algorithm. We
apply Lemma 1 of \cite{kaufmann16} with the stopping time $\sigma=T$
a.s. on the event $\mathcal{H}= \{ \hat{J}([2]) = J(\nu) \} \cap \{
\hat{F}([2]) = F(\nu) \} $ to get:
\begin{align*}
\mathbb{E}_{\nu^{\prime}}\left[N_{1}(T)\right] \textrm{KL}\left(\nu^{\prime}(1), \nu(1)\right)
&+\mathbb{E}_{\nu^{\prime}}\left[N_{2}(T)\right] \textrm{KL}\left(\nu^{\prime}(2), \nu(2)\right) \\
&\geq d\left(\mathbb{P}_{\nu^{\prime}}(\mathcal{H}), \mathbb{P}_{\nu}(\mathcal{H})\right), \numberthis \label{eqn: kaufmann_lemma1}
\end{align*}
where $\mathbb{E}_{\nu}(\cdot)$ and $\mathbb{P}_{\nu}(\cdot)$ denote
the expectation and the probability, respectively, with respect to the
randomness introduced by the interaction of the algorithm with the
bandit instance $\nu$, and $d(\cdot, \cdot)$ denotes the binary
relative entropy. Denote by $e_T(\nu)$ the probability of error of the
algorithm on the instance $\nu$.

We have that $e_{T}(\nu)=1-\mathbb{P}_{\nu}(\mathcal{H})$ and
$e_{T}\left(\nu^{\prime}\right) \geq
\mathbb{P}_{\nu^{\prime}}(\mathcal{H})$. As algorithm $\mathcal{L}$ is
consistent, we have that for every $\epsilon>0, \exists
T_{0}(\epsilon)$ such that for all $T \geq T_{0}(\epsilon),
\mathbb{P}_{\nu^{\prime}}(\mathcal{H}) \leq \epsilon \leq
\mathbb{P}_{\nu}(\mathcal{H})$. For $T \geq T_{0}(\epsilon)$, we have:
\begin{align*}
&\mathbb{E}_{\nu^{\prime}} \left[N_{1}(T)\right] \textrm{KL}\left(\nu^{\prime}(1), \nu(1)\right)
+\mathbb{E}_{\nu^{\prime}}\left[N_{2}(T)\right] \textrm{KL}\left(\nu^{\prime}(2), \nu(2)\right) \\
&\quad \geq d\left(\epsilon, 1-e_{T}(\nu)\right) \geq(1-\epsilon) \log \frac{1-\epsilon}{e_{T}(\nu)}+\epsilon \log \epsilon. 
\end{align*}
In the limit where $\epsilon$ goes to zero, we have,
\begin{align*}
&\limsup_{T \rightarrow \infty}-\frac{1}{T} \log e_{T}(\nu) \\
&\leq \limsup_{T \rightarrow \infty} \sum_{j=1}^{2} \frac{\mathbb{E}_{\nu^{\prime}}\left[N_{j}(T)\right]}{T} \textrm{KL}\left(\nu^{\prime}(j), \nu(j)\right) \\
&\leq \max_{j=1,2} \textrm{KL}\left(\nu^{\prime}(j), \nu(j)\right).
\end{align*}
Denote by $\mathcal{M}$ the set of two-armed bandit instances whose
arms belong to $\mathcal{D}$. Minimizing the RHS over all
$\nu^{\prime} \in \mathcal{M}$ whose correct output differs from that
of $\nu$ gives us:
\begin{align*}
&\limsup_{T \rightarrow \infty}-\frac{1}{T} \log e_{T}(\nu) \\
&\leq \underset{ \substack{\nu^{\prime} \in \mathcal{M}: \\ J(\nu^{\prime} ) \neq J(\nu) \text{ or} \\  F(\nu^{\prime} ) \neq F(\nu) } }{\inf} \max_{j=1,2} \textrm{KL}\left(\nu^{\prime}(j), \nu(j)\right). \numberthis \label{eqn: kaufmann}
\end{align*}
Using the formula for the KL divergence between two multivariate
distributions in \eqref{eqn: kaufmann} gives:
\begin{align*}
&\limsup_{T \rightarrow \infty}-\frac{1}{T} \log e_{T}(\nu) \\
&\leq \frac{1}{2} \underset{ \substack{\nu^{\prime} \in \mathcal{M}: \\ J(\nu^{\prime} ) \neq J(\nu) \text{ or} \\  F(\nu^{\prime} ) \neq F(\nu) } }{\inf} \max \Bigl \{  a_1\left(\mu_{1}(1)-\mu_{1}^{\prime}(1)\right)^2 \\ &+  a_2\left(\mu_{2}(1)-\mu_{2}^{\prime}(1)\right)^2  ,  
a_1\left(\mu_{1}(2)-\mu_{1}^{\prime}(2)\right)^2 \\
&+  a_2\left(\mu_{2}(2)-\mu_{2}^{\prime}(2)\right)^2  \Bigr \}. \numberthis \label{eqn: gaps_lb}
\end{align*}
Evaluating the RHS of \eqref{eqn: gaps_lb} for each type of two-armed
bandit instance gives the required result. There are broadly two cases
involved here: $\nu$ being a feasible instance and $\nu$ being an
infeasible instance. The former has three subcases for the non-optimal
arm, i.e., arm 2: arm 2 is feasible suboptimal, deceiver, and
infeasible suboptimal. The general methodology used here is to
minimize both terms inside the maximum subject to the constraints on
the arms.

\noindent \textbf{Case 1: $\nu$ is feasible and $\nu(2)$ is feasible
  suboptimal}\\
We first evaluate the infimum over the two cases: $J(\nu^{\prime} )
\neq J(\nu), F(\nu^{\prime} ) = F(\nu)$, and $F(\nu^{\prime} ) \neq
F(\nu)$, and then find the minimum of these two cases. In the former
case, we have that $J(\nu^{\prime} ) \neq J(\nu), F(\nu^{\prime} ) =
F(\nu)$, i.e., both $\nu$ and $\nu^{\prime}$ are feasible but their
optimal arms are different, while in the latter case, we have that
$\nu^{\prime}$ is infeasible. We first consider the former case. WLOG,
we assume that $J(\nu)=1$ and $J(\nu^{\prime})=2$.
\begin{itemize}
\item \textit{Arm 1 of $\nu^{\prime}$ is feasible.}\\ In this case, we
  have that $\mu^{\prime}_2(1)\leq \tau$, and hence there are no
  restrictions on $\mu^{\prime}_2(1)$ and $\mu^{\prime}_2(2)$ (as long
  as they are below $\tau$). We thus set $\mu^{\prime}_2(1) = \mu_2(1)
  $ and $\mu^{\prime}_2(2)=\mu_2(2)$. It follows that:
\begin{align*}
&\underset{ \substack{\nu^{\prime} \in \mathcal{M}: \\ J(\nu^{\prime} ) =2  \\  \mu_2^{\prime}(2),\mu_2^{\prime}(1) < \tau  } }{\inf} \max \Bigl \{  a_1\left(\mu_{1}(1)-\mu_{1}^{\prime}(1)\right)^2 \\ &+  a_2\left(\mu_{2}(1)-\mu_{2}^{\prime}(1)\right)^2  ,  
a_1\left(\mu_{1}(2)-\mu_{1}^{\prime}(2)\right)^2 \\
&+  a_2\left(\mu_{2}(2)-\mu_{2}^{\prime}(2)\right)^2  \Bigr \} \\
&= \underset{ \substack{\nu^{\prime} \in \mathcal{M}: \\  \mu_1^{\prime}(2) \leq \mu_1^{\prime}(1)  } }{\inf}   \max \Bigl \{  a_1\left(\mu_{1}(1)-\mu_{1}^{\prime}(1)\right)^2   ,  \\
&a_1\left(\mu_{1}(2)-\mu_{1}^{\prime}(2)\right)^2  \Bigr \} \\
&= \frac{a_1(\mu_{1}(2) - \mu_{1}(1))^2}{4},
\end{align*}
where the infimum is attained midway between $\mu_{1}(1)$ and
$\mu_{1}(2)$.
\item \textit{Arm 1 of $\nu^{\prime}$ is infeasible.}\\ In this case,
  we have that $\mu^{\prime}_2(1)>\tau$, and hence there are no
  restrictions on $\mu^{\prime}_1(1)$, $\mu^{\prime}_1(2)$, and
  $\mu^{\prime}_2(2)$ (as long as it is below $\tau$). We thus set
  $\mu^{\prime}_1(1)=\mu_1(1)$, $\mu^{\prime}_1(2)=\mu_1(2)$ and
  $\mu^{\prime}_2(2)=\mu_2(2)$. Thus,
\begin{align*}
&\underset{ \substack{\nu^{\prime} \in \mathcal{M}: \\ J(\nu^{\prime} ) =2  \\  \mu_2^{\prime}(2) < \tau  < \mu_2^{\prime}(1)  } }{\inf} \max \Bigl \{  a_1\left(\mu_{1}(1)-\mu_{1}^{\prime}(1)\right)^2 \\ &+  a_2\left(\mu_{2}(1)-\mu_{2}^{\prime}(1)\right)^2  ,  
a_1\left(\mu_{1}(2)-\mu_{1}^{\prime}(2)\right)^2 \\
&+  a_2\left(\mu_{2}(2)-\mu_{2}^{\prime}(2)\right)^2  \Bigr \} \\
&= \underset{ \substack{\nu^{\prime} \in \mathcal{M}: \\ J(\nu^{\prime} ) =2  \\  \tau  < \mu_2^{\prime}(1)  } }{\inf}   a_2\left(\mu_{2}(1)-\mu_{2}^{\prime}(1)\right)^2    \\
&= a_2 (\tau - \mu_2(1))^2.
\end{align*}
\end{itemize}
It is enough to evaluate the infimum only for the case where
$J(\nu^{\prime} ) \neq J(\nu), F(\nu^{\prime} ) = F(\nu)$ because in
the case where $F(\nu^{\prime} ) \neq F(\nu)$, the infimum is at least
$a_2 \max \left \{ (\tau - \mu_2(1))^2, (\tau - \mu_2(2))^2 \right
\}$. Thus, combining the results of the cases discussed above, in the
case of a feasible instance with optimal arm being arm 1 and arm 2
being a suboptimal feasible arm, we have that
\begin{align*}
&\limsup_{T \rightarrow \infty}-\frac{1}{T} \log e_{T}(\nu) \leq \\
&\frac{1}{2} \min \left  \{a_2 (\tau - \mu_2(1))^2 , \frac{a_1(\mu_{1}(2) - \mu_{1}(1))^2}{4} \right \}.
\end{align*}

\noindent \textbf{Case 2: $\nu$ is feasible and $\nu(2)$ is a
  deceiver} \\
We first evaluate the infimum over the two cases: $J(\nu^{\prime} )
\neq J(\nu), F(\nu^{\prime} ) = F(\nu)$, and $F(\nu^{\prime} ) \neq
F(\nu)$; and then find the minimum of these two cases. In the former
case, we have that $J(\nu^{\prime} ) \neq J(\nu), F(\nu^{\prime} ) =
F(\nu)$, i.e., both $\nu$ and $\nu^{\prime}$ are feasible but their
optimal arms are different, while in the latter case, we have that
$\nu^{\prime}$ is infeasible. We first consider the former case. WLOG,
we assume that $J(\nu)=1$ and $J(\nu^{\prime})=2$.
\begin{itemize}
\item \textit{Arm 1 of $\nu^{\prime}$ is feasible.}\\ In this case, we
  have that $\mu^{\prime}_2(1)\leq \tau$. As we also have that
  $\mu_1(2) \leq \mu_1(1)$ and the only constraint on the first
  dimensions of the arms of instance $\nu^{\prime}$ is
  $\mu^{\prime}_1(2) \leq \mu^{\prime}_1(1)$, we set $\mu_1(2) =
  \mu^{\prime}_1(2)$ and $\mu_1(1) = \mu^{\prime}_1(1)$ to minimize
  each term inside the maximum. Thus,
\begin{align*}
&\underset{ \substack{\nu^{\prime} \in \mathcal{M}: \\ J(\nu^{\prime} ) =2  \\   \mu_2^{\prime}(1) ,\mu_2^{\prime}(2) < \tau  } }{\inf} \max \Bigl \{  a_1\left(\mu_{1}(1)-\mu_{1}^{\prime}(1)\right)^2 \\ &+  a_2\left(\mu_{2}(1)-\mu_{2}^{\prime}(1)\right)^2  ,  
a_1\left(\mu_{1}(2)-\mu_{1}^{\prime}(2)\right)^2 \\
&+  a_2\left(\mu_{2}(2)-\mu_{2}^{\prime}(2)\right)^2  \Bigr \} \\
&= \underset{ \substack{\nu^{\prime} \in \mathcal{M}: \\ J(\nu^{\prime} ) =2  \\   \mu_2^{\prime}(1) ,\mu_2^{\prime}(2) < \tau   } }{\inf}   \max \Bigl \{  a_2\left(\mu_{2}(1)-\mu_{2}^{\prime}(1)\right)^2  ,  \\
&  a_2\left(\mu_{2}(2)-\mu_{2}^{\prime}(2)\right)^2  \Bigr \} \\
&= \underset{ \substack{\nu^{\prime} \in \mathcal{M}: \\ J(\nu^{\prime} ) =2  \\   \mu_2^{\prime}(1) ,\mu_2^{\prime}(2) < \tau   } }{\inf}    a_2\left(\mu_{2}(2)-\mu_{2}^{\prime}(2)\right)^2    \\
&= a_2 (\mu_2(2) - \tau  )^2 .
\end{align*}

\item \textit{Arm 1 of $\nu^{\prime}$ is infeasible.}\\ In this case,
  we have that $\mu^{\prime}_2(1)>\tau$, and hence there are no
  restrictions on $\mu^{\prime}_1(1)$ and $\mu^{\prime}_1(2)$. In this
  case, we set $\mu^{\prime}_1(1)=\mu_1(1)$,
  $\mu^{\prime}_1(2)=\mu_1(2)$. Thus,
\begin{align*}
&\underset{ \substack{\nu^{\prime} \in \mathcal{M}: \\ J(\nu^{\prime} ) =2  \\  \mu_2^{\prime}(2) < \tau  < \mu_2^{\prime}(1)  } }{\inf} \max \Bigl \{  a_1\left(\mu_{1}(1)-\mu_{1}^{\prime}(1)\right)^2 \\ &+  a_2\left(\mu_{2}(1)-\mu_{2}^{\prime}(1)\right)^2  ,  
a_1\left(\mu_{1}(2)-\mu_{1}^{\prime}(2)\right)^2 \\
&+  a_2\left(\mu_{2}(2)-\mu_{2}^{\prime}(2)\right)^2  \Bigr \} \\
&= \underset{ \substack{\nu^{\prime} \in \mathcal{M}: \\ J(\nu^{\prime} ) =2  \\  \mu_2^{\prime}(2) < \tau  < \mu_2^{\prime}(1)  } }{\inf}   \max \Bigl \{  a_2\left(\mu_{2}(1)-\mu_{2}^{\prime}(1)\right)^2  ,  \\
&  a_2\left(\mu_{2}(2)-\mu_{2}^{\prime}(2)\right)^2  \Bigr \} \\
&= a_2 \max \left \{  (\tau - \mu_2(1))^2, (\mu_2(2) - \tau  )^2 \right \}.
\end{align*}
\end{itemize}

We now consider the case where $F(\nu^{\prime} ) \neq F(\nu)$, i.e.,
$\nu^{\prime}$ is an infeasible instance. Here, as there are no
constraints on arm 2 of the instance $\nu^{\prime}$ apart from
$\mu^{\prime}_2(2)\geq \mu^{\prime}_2(1)\geq \tau$, we set
$\mu^{\prime}(2)=\mu(2)$. We also set $\mu^{\prime}_1(1)=\mu_1(1)$ as
the only constraints on arm 1 of the instance $\nu^{\prime}$ is that
$\mu^{\prime}_2(2)\geq \mu^{\prime}_2(1)\geq \tau$. Thus,
\begin{align*}
&\underset{ \substack{\nu^{\prime} \in \mathcal{M}: \\    \mu_2^{\prime}(1) ,\mu_2^{\prime}(2) > \tau  } }{\inf} \max \Bigl \{  a_1\left(\mu_{1}(1)-\mu_{1}^{\prime}(1)\right)^2 \\ &+  a_2\left(\mu_{2}(1)-\mu_{2}^{\prime}(1)\right)^2  ,  
a_1\left(\mu_{1}(2)-\mu_{1}^{\prime}(2)\right)^2 \\
&+  a_2\left(\mu_{2}(2)-\mu_{2}^{\prime}(2)\right)^2  \Bigr \} \\
&=\underset{ \substack{\nu^{\prime} \in \mathcal{M}: \\ J(\nu^{\prime} ) =2  \\   \mu_2^{\prime}(1) ,\mu_2^{\prime}(2) > \tau  } }{\inf}  a_2\left(\mu_{2}(1)-\mu_{2}^{\prime}(1)\right)^2   \\
&= a_2 (\tau - \mu_2(1) )^2 .
\end{align*}
Thus, combining the results of the three cases above, we have that for
a two-armed feasible instance $\nu$ with arm 1 being the optimal arm
and arm 2 being a deceiver arm,
\begin{align*}
&\limsup_{T \rightarrow \infty}-\frac{1}{T} \log e_{T}(\nu) \leq \\
&\frac{1}{2} \min \left  \{a_2 (\tau - \mu_2(1))^2 , a_2 (\mu_2(2) - \tau  )^2 \right \}.
\end{align*}

\noindent \textbf{Case 3: $\nu$ is feasible and $\nu(2)$ is infeasible
  suboptimal} \\ We first evaluate the infimum over the two cases:
$J(\nu^{\prime} ) \neq J(\nu), F(\nu^{\prime} ) = F(\nu)$, and
$F(\nu^{\prime} ) \neq F(\nu)$; and then find the minimum over these
two cases. In the former case, we have that $J(\nu^{\prime} ) \neq
J(\nu), F(\nu^{\prime} ) = F(\nu)$, i.e., both $\nu$ and
$\nu^{\prime}$ are feasible but their optimal arms are different,
while in the latter case, we have that $\nu^{\prime}$ is
infeasible. We first consider the former case. WLOG, we assume that
$J(\nu)=1$ and $J(\nu^{\prime})=2$.
\begin{enumerate}
\item \textit{Arm 1 of $\nu^{\prime}$ is feasible.} \\ In this case,
  as there are no restrictions on the second dimensions of the arms of
  $\nu^{\prime}$ apart from them being smaller than $\tau$, we set
  $\mu^{\prime}_2(1)=\mu_2(1)$, $\mu^{\prime}_2(2)=\tau$. Also, as
  $\mu^{\prime}_1(2) \leq \mu^{\prime}_1(1)$, to attain the infimum in
  \eqref{eqn: gaps_lb}, it is clear that $\mu^{\prime}_1(2) \geq
  \mu_1(1)$ and $\mu^{\prime}_1(1) \leq \mu_1(2)$. Hence, we also set
  $\mu^{\prime}_1(2) = \mu^{\prime}_1(1)$. Thus,
\begin{align*}
&\underset{ \substack{\nu^{\prime} \in \mathcal{M}: \\J(\nu^{\prime} ) =2  \\  \mu_2^{\prime}(1) ,\mu_2^{\prime}(2) < \tau   } }{\inf} \max \Bigl \{  a_1\left(\mu_{1}(1)-\mu_{1}^{\prime}(1)\right)^2 \\ &+  a_2\left(\mu_{2}(1)-\mu_{2}^{\prime}(1)\right)^2  ,  
a_1\left(\mu_{1}(2)-\mu_{1}^{\prime}(2)\right)^2 \\
&+  a_2\left(\mu_{2}(2)-\mu_{2}^{\prime}(2)\right)^2  \Bigr \} \\
&=\underset{ \substack{\nu^{\prime} \in \mathcal{M}: \\ J(\nu^{\prime} ) =2  \\  \mu_2^{\prime}(1) ,\mu_2^{\prime}(2) < \tau  } }{\inf} \max \Bigl \{  a_1\left(\mu_{1}(1)-\mu_{1}^{\prime}(1)\right)^2, \\ &  
a_1\left(\mu_{1}(2)-\mu_{1}^{\prime}(2)\right)^2 \\
&+  a_2\left(\mu_{2}(2)-\tau \right)^2  \Bigr \}. \\
\end{align*}
For notational simplicity, let $M$ denote
$\sqrt{a_1}(\mu_1(2)-\mu_1(1))$, $y$ denote
$\sqrt{a_2}(\mu_2(2)-\tau)$ and $x$ denote
$\sqrt{a_1}(\mu^{\prime}_1(1)-\mu_1(1))$. Note that $x<M$. Then the
expression to be evaluated is:
\begin{align*}
\underset{x}{\inf} \max \{  x^2,y^2 +(M-x)^2 \}.
\end{align*}
We consider the following two cases: 
\begin{enumerate}
\item $y>M$. \\
We have that 
\begin{align*}
\frac{y^2+M^2}{2M} >M. 
\end{align*}
As we also have that $x<M$, 
\begin{align*}
x<\frac{y^2+M^2}{2M},
\end{align*}
which gives that 
\begin{align*}
y^2+(M-x)^2 > x^2.
\end{align*}
Thus,
\begin{align*}
\underset{x}{\inf}  \max \{  x^2,y^2+(M-x)^2 \} &=  \underset{x}{\inf} \{ y^2+(M-x)^2 \} \\
&= y^2.
\end{align*}
\item $y \leq M$. \\
We have that
\begin{align*}
\frac{y^2+M^2}{2M}\leq M.
\end{align*}   
If
\begin{align*}
    x \leq \frac{y^2+M^2}{2M}, 
\end{align*}  
then
\begin{align*}
 \max \{  x^2,y^2 +(M-x)^2 \} &=y^2 +(M-x)^2, 
 \end{align*} 
which is minimum at  $x=\frac{y^2+M^2}{2M}$. Also, if
\begin{align*}
    x &\geq \frac{y^2+M^2}{2M},
 \end{align*} 
 then
\begin{align*}
\max \{  x^2,y^2 +(M-x)^2 \} &= x^2, 
 \end{align*} 
 which is minimum at $x=\frac{y^2+M^2}{2M}$. In both cases, 
 \begin{align*}
\underset{x}{\inf} \max \{  x^2,y^2+(M-x)^2 \} &= \left ( \frac{y^2+M^2}{2M}  \right)^2.
\end{align*}
\end{enumerate}
Combining the two cases $y>M$ and $y\leq M$, we have that
\begin{align*}
\underset{x}{\inf} \max \{  x^2,y^2 +(M-x)^2 \}= \left ( \frac{y^2+z^2}{2z}  \right)^2 ,
\end{align*}
 where $z=\max \{ y, M \}$. We also have that
 \begin{align*}
\frac{z^2}{4} \leq \left ( \frac{y^2+z^2}{2z}  \right)^2 \leq z^2, 
\end{align*}
i.e., the infimum is within a constant factor of $z^2$. As our
algorithm is motivated by these gaps and to avoid comparisons between
the first and the second dimensions, we have that
 \begin{align*}
\underset{x}{\inf} \max \{  x^2,y^2 +(M-x)^2 \} \leq z^2.
\end{align*}
\item \textit{Arm 1 of $\nu^{\prime}$ is infeasible.} \\ In this case,
  as there are no restrictions on the first dimensions of the arms of
  $\nu^{\prime}$, we set $\mu^{\prime}_1(1)=\mu_1(1)$,
  $\mu^{\prime}_1(2)=\mu_1(2)$. Thus,
\begin{align*}
&\underset{ \substack{\nu^{\prime} \in \mathcal{M}: \\ J(\nu^{\prime} ) =2  \\    \mu_2^{\prime}(2) < \tau  < \mu_2^{\prime}(1) } }{\inf} \max \Bigl \{  a_1\left(\mu_{1}(1)-\mu_{1}^{\prime}(1)\right)^2 \\ &+  a_2\left(\mu_{2}(1)-\mu_{2}^{\prime}(1)\right)^2,  
a_1\left(\mu_{1}(2)-\mu_{1}^{\prime}(2)\right)^2 \\
&+  a_2\left(\mu_{2}(2)-\mu_{2}^{\prime}(2)\right)^2  \Bigr \} \\
&=\underset{ \substack{\nu^{\prime} \in \mathcal{M}: \\ J(\nu^{\prime} ) =2  \\    \mu_2^{\prime}(2) < \tau  < \mu_2^{\prime}(1)  } }{\inf} \max \Bigl \{   a_2\left(\mu_{2}(1)-\mu_{2}^{\prime}(1)\right)^2,  \\
& a_2\left(\mu_{2}(2)-\mu_{2}^{\prime}(2)\right)^2  \Bigr \} \\
&= a_2 \max \left \{  (\tau - \mu_2(1))^2, (\mu_2(2) - \tau  )^2 \right \}.
\end{align*}
\end{enumerate}
Next, we consider the case where $F(\nu^{\prime} ) \neq F(\nu)$, i.e.,
$\nu^{\prime}$ is an infeasible instance.  In this case, as there are
no restrictions on the first dimensions of the arms of $\nu^{\prime}$,
we set $\mu^{\prime}_1(1)=\mu_1(1)$,
$\mu^{\prime}_1(2)=\mu_1(2)$. Moreover, as $\mu_2(2)>\tau$ and
$\mu_2^{\prime}(2)>\tau$, we set $\mu_2(2)= \mu_2^{\prime}(2)$. Thus,
\begin{align*}
&\underset{ \substack{\nu^{\prime} \in \mathcal{M}: \\    \mu_2^{\prime}(1), \mu_2^{\prime}(2) > \tau  } }{\inf} \max \Bigl \{  a_1\left(\mu_{1}(1)-\mu_{1}^{\prime}(1)\right)^2 \\ &+  a_2\left(\mu_{2}(1)-\mu_{2}^{\prime}(1)\right)^2,  
a_1\left(\mu_{1}(2)-\mu_{1}^{\prime}(2)\right)^2 \\
&+  a_2\left(\mu_{2}(2)-\mu_{2}^{\prime}(2)\right)^2  \Bigr \} \\
&=\underset{ \substack{\nu^{\prime} \in \mathcal{M}: \\ \mu_2^{\prime}(1), \mu_2^{\prime}(2) > \tau  } }{\inf}    a_2\left(\mu_{2}(1)-\mu_{2}^{\prime}(1)\right)^2  \\
&= a_2 (\tau - \mu_2(1))^2 .
\end{align*}

Thus, combining the results of all the cases discussed above, we have
that for a two-armed feasible instance $\nu$ with arm 1 being the
optimal arm and arm 2 being infeasible suboptimal,
\begin{align*}
&\limsup_{T \rightarrow \infty}-\frac{1}{T} \log e_{T}(\nu) \leq \\
& \quad \min \Bigl  \{a_2 (\tau - \mu_2(1))^2 , \\ &\max  \left \{ a_2(\mu_2(2)-\tau)^2, a_1(\mu_1(2)-\mu_1(1))^2 \right \}  \Bigr \}.
\end{align*} 

\noindent \textbf{Case 4: $\nu$ is infeasible} \\ We first evaluate
the infimum over the two cases: $J(\nu^{\prime} ) \neq J(\nu),
F(\nu^{\prime} ) = F(\nu)$, and $F(\nu^{\prime} ) \neq F(\nu)$; and
then find the minimum of these two cases. In the former case, we have
that $J(\nu^{\prime} ) \neq J(\nu), F(\nu^{\prime} ) = F(\nu)$, i.e.,
both $\nu$ and $\nu^{\prime}$ are infeasible but their optimal arms
are different, while in the latter case, we have that $\nu^{\prime}$
is feasible. We first consider the former case. WLOG, we assume that
$J(\nu)=1$ and $J(\nu^{\prime})=2$.

As there are no restrictions on $\mu^{\prime}_1(1)$ and
$\mu^{\prime}_1(2)$, we set $\mu^{\prime}_1(1)=\mu_1(1)$ and
$\mu^{\prime}_1(2)=\mu_1(2)$. It follows that:
\begin{align*}
&\underset{ \substack{\nu^{\prime} \in \mathcal{M}: \\ J(\nu^{\prime} ) =2  \\  \mu_2^{\prime}(2),\mu_2^{\prime}(1) > \tau } }{\inf} \max \Bigl \{  a_1\left(\mu_{1}(1)-\mu_{1}^{\prime}(1)\right)^2 \\ &+  a_2\left(\mu_{2}(1)-\mu_{2}^{\prime}(1)\right)^2  ,  
a_1\left(\mu_{1}(2)-\mu_{1}^{\prime}(2)\right)^2 \\
&+  a_2\left(\mu_{2}(2)-\mu_{2}^{\prime}(2)\right)^2  \Bigr \} \\
&= \underset{ \substack{\nu^{\prime} \in \mathcal{M}: \\  \tau < \mu_2^{\prime}(1) \leq \mu_2^{\prime}(2)   } }{\inf}   \max \Bigl \{  a_2\left(\mu_{2}(1)-\mu_{2}^{\prime}(1)\right)^2   ,  \\
&a_2 \left(\mu_{2}(2)-\mu_{2}^{\prime}(2)\right)^2  \Bigr \} \\
&= \frac{a_1(\mu_{2}(2) - \mu_{2}(1))^2}{4},
\end{align*}
where the infimum is attained midway between $\mu_{2}(1)$ and $\mu_{2}(2)$.

Next, we consider the case where $F(\nu^{\prime} ) \neq F(\nu)$, i.e.,
$\nu^{\prime}$ is a feasible instance. As the only restriction is that
at least one arm of $\nu^{\prime}$ is feasible, we set
$\mu^{\prime}_2=\mu_2$ and make arm 1 feasible. Thus, we have that
\begin{align*}
&\underset{ \substack{\nu^{\prime} \in \mathcal{M}: \\ \mathcal{K}(\nu^{\prime}) = \emptyset } }{\inf} \max \Bigl \{  a_1\left(\mu_{1}(1)-\mu_{1}^{\prime}(1)\right)^2 \\ &+  a_2\left(\mu_{2}(1)-\mu_{2}^{\prime}(1)\right)^2  ,  
a_1\left(\mu_{1}(2)-\mu_{1}^{\prime}(2)\right)^2 \\
&+  a_2\left(\mu_{2}(2)-\mu_{2}^{\prime}(2)\right)^2  \Bigr \} \\
&\leq \underset{ \substack{\nu^{\prime} \in \mathcal{M}: \\  \tau > \mu_2^{\prime}(1)    } }{\inf}    a_1\left(\mu_{1}(1)-\mu_{1}^{\prime}(1)\right)^2 +  a_2\left(\mu_{2}(1)-\mu_{2}^{\prime}(1)\right)^2 \\
&= a_2 (\mu_2(1) - \tau)^2.
\end{align*}
Thus, combining the results of the two cases discussed above, we have
that for a two-armed infeasible instance $\nu$ with optimal arm being
arm 1,
\begin{align*}
&\limsup_{T \rightarrow \infty}-\frac{1}{T} \log e_{T}(\nu) \leq \\
& \min \Bigl  \{\frac{a_1(\mu_{2}(2) - \mu_{2}(1))^2}{4} , a_2 (\mu_2(1) - \tau)^2 \Bigr \}.
\end{align*} 

\end{proof}

\section{Complete proof of Theorem \ref{thm: ub}}
\label{app: ub}

In this section, we complete the proof for Theorem~\ref{thm: ub} by
proving Lemmas~\ref{lemma:2arm-upper_bound}
and~\ref{lemma:3arm-upper_bound} for feasible and infeasible instances
separately.

\subsection{Underlying instance is feasible}

%%%%%%%%%%%%%%%%%%%%%%%%%%%%%%%%%%%%%%%%
%%%%%%%%%%%%%%%%%%%%%%%%%%%%%%%%%%%%%%%%
%%%%%%%%%%%%%%%%%%%%%%%%%%%%%%%%%%%%%%%%

\begin{proof}[Proof of Lemma \ref{lemma:2arm-upper_bound}]
Here, the feasible instance consists of two arms and each arm has been
drawn $n_1$ times. Let $\mathcal{B}_k$ denote the event that arm 1 is
empirically feasible at the end of round $k$. Thus,
\begin{align*}
\prob{\mathcal{A}_1} &= \prob{\mathcal{A}_1 \cap \mathcal{B}_1} + \prob{\mathcal{A}_1 \cap \mathcal{B}_1^{\mathsf{c}}} \\
%&\leq \prob{\mathcal{A}_1 \cap \mathcal{B}_1} + \prob{ \mathcal{B}_1^{\mathsf{c}}} \\
&\leq \prob{\mathcal{A}_1 \cap \mathcal{B}_1} + \prob{ \mu_2^{1}(1) >
  \tau }. \shortintertext{Using \eqref{eqn: conc_inequality} to bound
  the last term from above, we get:} \prob{\mathcal{A}_1} &\leq
\prob{\mathcal{A}_1 \cap \mathcal{B}_1} + 2 \exp \left ( -a_2 n_{1}
(\tau-\mu_2(1))^2\right ) \\ &\leq \prob{\mathcal{A}_1 \cap
  \mathcal{B}_1} + 2 \exp \left ( -n_{1} \Delta(1,2)^2 \right
). \numberthis \label{eqn: 2 arm ub main}
\end{align*}
 The event $\{{\mathcal{A}_1 \cap \mathcal{B}_1}\}$ corresponds to the
 set of outcomes where arm 1 is empirically feasible at the end of
 round 1 and is still rejected. This would require that arm 2 be
 empirically feasible and also be the empirically optimal arm. Thus,
 we get
 \begin{equation}
\prob{\mathcal{A}_1 \cap \mathcal{B}_1} \leq \prob{\hat{\mu}_2^1(2) \leq \tau, \hat{\mu}_1^1(1) > \hat{\mu}_1^1(2)}.  \label{eqn: consr_2arm_1_feasible}
 \end{equation}
 This can be bounded using \eqref{eqn: conc_inequality} depending upon
 the nature of arm 2.

\noindent {\bf Case 1: Arm 2 is a feasible suboptimal arm.} Using
\eqref{eqn: consr_2arm_1_feasible},
 \begin{align*}
\prob{\mathcal{A}_1 \cap \mathcal{B}_1} &\leq \prob{\hat{\mu}_1^1(1) > \hat{\mu}_1^1(2)} \\
&= \mathbb{P} \Bigl ( (\hat{\mu}_1^1(1) - \mu_1(1))- (\hat{\mu}_1^1(2) - \mu_1(2)) \\
&\qquad\qquad > (\mu_1(2)-\mu_1(1)) \Bigr ) \\
&\leq \prob{ (\hat{\mu}_1^1(1) - \mu_1(1))> \frac{(\mu_1(2)-\mu_1(1))}{2} } \\
&+ \prob{ (\hat{\mu}_1^1(2) - \mu_1(2)) < - \frac{(\mu_1(2)-\mu_1(1))}{2} },
\end{align*}
where the last step follows from the fact that both $(\hat{\mu}_1^1(1)
- \mu_1(1))$ and $( \mu_1(2) - \hat{\mu}_1^1(2)) $ cannot be greater
than $\frac{(\mu_1(2)-\mu_1(1))}{2} $ and a subsequent union bounding
argument. Thus, using \eqref{eqn: conc_inequality}:
\begin{align*}
\prob{\mathcal{A}_1 \cap \mathcal{B}_1} &\leq 4 \exp(-\frac{n_1}{4} a_1(\mu_1(2)-\mu_1(1))^2)  \\
&\leq 4 \exp(- \frac{n_1}{4} \Delta(1,2)^2). \numberthis \label{eqn: 3 arm ub feasible subopt}
\end{align*}

 \noindent {\bf Case 2: Arm 2 is a deceiver arm.}
 Similarly, using \eqref{eqn: consr_2arm_1_feasible},
  \begin{align*}
\prob{\mathcal{A}_1 \cap \mathcal{B}_1} &\leq \prob{\hat{\mu}_2^1(2) \leq \tau} \\
&\leq 2 \exp \left ( -n_{1} a_2 (\mu_2(2)-\tau)^2 \right )  \\
&\leq 2 \exp \left ( -n_{1} \Delta(1,2)^2 \right ).
  \end{align*}

\noindent {\bf Case 3: Arm 2 is an infeasible suboptimal arm.} This
case follows from Case~1 if $\delta(1,2)$ is dictated by the the
suboptimality gap of arm~2, and from Case~2 if $\delta(1,2)$ is
dictated by the the infeasibility gap of arm~2.

\ignore{
From~\eqref{eqn: consr_2arm_1_feasible},
 \begin{align*}
& \prob{\mathcal{A}_1 \cap \mathcal{B}_1} \leq  \min \left \{ \prob{ \hat{\mu}_2^1(2) \leq \tau}, \prob{ \hat{\mu}_1^1(1) > \hat{\mu}_1^1(2)} \right \} \\
&\leq 4 \exp(- \frac{n_1}{4} \max \left \{ a_1(\mu_1(2)-\mu_1(1))^2), a_2 (\mu_2(2)-\tau)^2 \right \} ) \\
 &= 4 \exp(-\frac{n_1}{4} \delta(1,2)^2). 
 \end{align*}
} % end ignore

The statement of the lemma now follows, combining these three cases
with~\eqref{eqn: 2 arm ub main}.
\ignore{Substituting the above in \eqref{eqn: 2 arm ub main} gives us:
\begin{align*}
\prob{\mathcal{A}_1} &\leq 4 \exp(-\frac{n_1}{4} \delta(1,2)^2) + 2 \exp \left ( -n_{1} \Delta(1,2)^2\right ) \\
&\leq 6 \exp(-\frac{n_1}{4} \Delta(1,2)^2). 
\end{align*}
} %end ignore
\end{proof}

%%%%%%%%%%%%%%%%%%%%%%%%%%%%%%%%%%%%%%%%
%%%%%%%%%%%%%%%%%%%%%%%%%%%%%%%%%%%%%%%%
%%%%%%%%%%%%%%%%%%%%%%%%%%%%%%%%%%%%%%%%

\begin{proof}[Proof of Lemma~\ref{lemma:3arm-upper_bound}]
The feasible instance consists of three arms, each of which has been
drawn $n_1$ times. Let $\mathcal{B}_k$ denote the event that arm 1 is
empirically feasible at the end of round $k$. Proceeding similarly as
in \eqref{eqn: 2 arm ub main},
\begin{align*}
&\prob{ \{\hat{J}(A_k) = 2\} \cap \mathcal{A}_1} \\
  &\leq \prob{   \mathcal{A}_1 \cap \mathcal{B}_1 \cap \{\hat{J}(A_k) = 2\}} +  2 \exp \left ( - n_{1} \Delta(1,3)^2 \right ) \\
  &=: \prob{\G} + 2 \exp \left ( - n_{1} \Delta(1,3)^2 \right ). \numberthis \label{eqn: 3 arm ub main}
\end{align*}
The term $\prob{\G}$ can be bounded depending upon the nature of arm~3.

\noindent {\bf Case 1: Arm 3 is a feasible suboptimal arm.}

\noindent Note that $\prob{\G}$ is the probability of arm 1 being
rejected at the end of round 1, arm 2 being empirically optimal, and
arm 1 looking empirically feasible. Event~$\G$ thus implies that arm~2
is also empirically feasible, and moreover, has a lower value of the
objective attribute~$\hat\mu_1^1(\cdot)$ than arm~1. We now further
decompose $\prob{\G}$ as follows:
\begin{align*}
  \prob{\G} =
  &\prob{\G \cap\{\hat\Delta(2,1) = \hat\Delta(2,3)\}} \\
  &\quad + \prob{\G \cap\{\hat\Delta(2,1) > \hat\Delta(2,3)\}} =: \prob{\G_1} + \prob{\G_2}
\end{align*}

{\bf Bounding~$\prob{\G_1}$:} The event~$\G_1$ implies that
arm~1 is rejected based on our tie breaking rule. This can only occur
if arm~3 appears empirically feasible, and moreover, appears superior
to arm~1 on the objective attribute $\hat\mu_1^1(\cdot).$ Thus,
\begin{align*}
 \prob{\G_1} \leq \prob{\hat\mu_1^1(1) \geq  \hat\mu_1^1(3)} \leq 4 \exp(-\frac{n_1}{4} \delta(1,3)^2),
\end{align*}
where the last step follows from \eqref{eqn: 3 arm ub feasible subopt}.

{\bf Bounding~$\prob{\G_2}$:} The event~$\G_1$ implies that
arm~1 is rejected based on its estimated suboptimality gap alone. In
this case, we must have
%In this case, we have that $\hat\Delta(2,3) = \hat\delta(2,3)$. Note
%that $\hat\Delta(2,3) = \tau - \hat\mu^1_2(2)$ would imply that
%$\hat\Delta(2,1) = \tau - \hat\mu^1_2(2)$ as $\hat\Delta(2,1) \geq
%\hat\Delta(2,3)$ and $\hat\Delta(2,1) \leq \tau - \hat\mu^1_2(2)$,
%which has already been covered in the previous case. Thus, we have
%that
\begin{align*}
 \hat\delta(2,3) = \hat\Delta(2,3) < \hat\Delta(2,1) \leq \hat\delta(2,1) = \sqrt{a_1} \left(\hat\mu_1^1(1) - \hat\mu_1^1(2)\right).
\end{align*}
Thus, $\prob{\G_2}$ is upper bounded by
%\begin{align*}
%  &\prob{   \mathcal{A}_1 \cap \mathcal{B}_1 \cap \{\hat{J}(A_k) = 2\}} \\
% &= \prob{ \{\hat\Delta(2,1) \geq \hat\Delta(2,3) \} \cap \mathcal{B}_1 \cap \{\hat{J}(A_k) = 2\}} \\
%  &\leq  \prob{ \{\hat\delta(2,1) \geq \hat\delta(2,3) \} \cap \mathcal{B}_1 \cap \{\hat{J}(A_k) = 2\}}  \\
%  &=  \prob{ \{ \sqrt{a_1} ( \hat\mu_1^1(1) - \hat\mu_1^1(2) ) \geq \hat\delta(2,3) \} \cap \mathcal{B}_1 \cap \{\hat{J}(%A_k) = 2\}}. 
%\end{align*}
\begin{equation*}
  \prob{\{\sqrt{a_1} \left(\hat\mu_1^1(1) - \hat\mu_1^1(2)\right) > \hat\delta(2,3) \} \cap \mathcal{B}_1 \cap \{\hat{J}(A_k) = 2\}}.
\end{equation*}
Note that for $\sqrt{a_1} ( \hat\mu_1^1(1) - \hat\mu_1^1(2) ) \geq
\hat\delta(2,3) $ to happen when arm~2 is empirically optimal and
arm~1 is empirically feasible, it cannot be that arm~3 has a higher
$\hat{\mu}^{1}_1(\cdot)$ than arm~1 (i.e., arm~3 appears inferior on
the objective attribute), regardless of whether arm 3 is empirically
feasible or infeasible.
%This is because, when arm 2 is empirically
%optimal and arm 1 is empirically feasible, if arm 3 had a higher
%$\hat{\mu}^{1}_1(\cdot)$ than arm 1, then
%\begin{align*}
%\hat\delta(2,3) &\geq \sqrt{a_1} \left ( \hat{\mu}^1_1(3)-\hat{\mu}^1_1(2) \right ) \\
%&\geq \sqrt{a_1} \left ( \hat{\mu}^1_1(1)-\hat{\mu}^1_1(2) \right ).
%\end{align*}
Thus, we have that:
\begin{align*}
\prob{\G_2} \leq \prob{\hat{\mu}_1^1(3) \leq \hat{\mu}_1^1(1) } \leq 4 \exp \left(-\frac{n_1}{4} \delta(1,3)^2 \right).
\end{align*}

Thus, for a feasible instance where arm 3 is suboptimal, the
probability of arm 1 being rejected at the end of the first round when
arm 2 is empirically optimal can be bounded as follows, combining
\eqref{eqn: 3 arm ub main} with our bounds on $\prob{\G_1}$ and
$\prob{\G_2}:$
\begin{equation}
  \prob{ \{\hat{J}(A_k) = 2\} \cap \mathcal{A}_1} \leq 10 \exp \left ( - \frac{n_1}{4} \Delta(1,3)^2 \right ).
  \label{eqn: 3 arm ub feasible subopt main}
\end{equation}

\noindent {\bf Case 2: Arm 3 is a deceiver arm.}

Next, we consider the case where arm 3 is an infeasible arm. We
bound~$\prob{\G}$ in the following way:
\begin{align}
  \prob{\G} &= \prob{\G \cap \{\hat{\mu}^1_1(3) \leq \tau \}} + \prob{\G \cap \{\hat{\mu}^1_1(3) > \tau \}} \nonumber \\
  &\leq \prob{\hat{\mu}^1_1(3) \leq \tau} + \prob{\G \cap \{\hat{\mu}^1_1(3) > \tau \}} \nonumber \\
  &\leq 2 \exp (-n_1 \Delta(1,3)^2) + \prob{\G \cap \{\hat{\mu}^1_1(3) > \tau \}} \nonumber \\
  &=2 \exp (-n_1 \Delta(1,3)^2) \nonumber \\
  &\quad + \prob{\G \cap \{\hat{\mu}^1_1(3) > \tau \} \cap \{\hat\Delta(2,1) = \hat\Delta(2,3)\}} \nonumber \\
  &\quad + \prob{\G \cap \{\hat{\mu}^1_1(3) > \tau \} \cap \{\hat\Delta(2,1) > \hat\Delta(2,3)\}} \nonumber \\
  &=: 2 \exp (-n_1 \Delta(1,3)^2) + \prob{\G_1} + \prob{\G_2} 
\label{eq:lemma4_2}
\end{align}

\ignore{
\begin{align*}
&\prob{   \mathcal{A}_1 \cap \mathcal{B}_1 \cap \{\hat{J}(A_k) = 2\}} \\
&=\prob{   \mathcal{A}_1 \cap \{\hat{\mu}^1_1(3) \leq \tau \} \cap\mathcal{B}_1 \cap \{\hat{J}(A_k) = 2\}}  \\
&+ \prob{   \mathcal{A}_1 \cap  \{\hat{\mu}^1_1(3) > \tau \} \cap \mathcal{B}_1 \cap \{\hat{J}(A_k) = 2\}} \\
&\leq \prob{\{\hat{\mu}^1_1(3) \leq \tau \} } \\
&+\prob{   \mathcal{A}_1 \cap  \{\hat{\mu}^1_1(3) > \tau \} \cap \mathcal{B}_1 \cap \{\hat{J}(A_k) = 2\}} \\
%&+ \prob{ \{ \hat\delta(2,3) < \hat\delta(2,1) \} \cap  \{\hat{\mu}^1_1(3) > \tau \}  \, | \, \mathcal{B}_1 \cap \{\hat{J}(A_k) = 2\} }\\
&\leq 2 \exp (-n_1 a_2 (\mu_2(3)-\tau)^2 ) \\
&+ \prob{   \mathcal{A}_1 \cap \{\hat{\mu}^1_1(3) > \tau \} \cap \mathcal{B}_1 \cap \{\hat{J}(A_k) = 2\}} \\
&\leq 2 \exp (-n_1 \Delta(1,3)^2 ) \\
&+ \prob{   \mathcal{A}_1 \cap \{\hat{\mu}^1_1(3) > \tau \} \cap \mathcal{B}_1 \cap \{\hat{J}(A_k) = 2\}},
\end{align*}
}%end ignore
%where the second last step follows from \eqref{eqn:
%conc_inequality}. The second term in the above expression is the
%probability of arm 1 being rejected at the end of round 1, arm 3
%being empirically infeasible, arm 1 being empirically feasible and
%arm 2 being the empirically optimal arm. We again consider the
%following two cases while bounding this term:

Note that $\prob{\G_1} = 0,$ since when there is a tie in the
estimated suboptimality gaps of arms~1 and~3, with arm~1 appearing
feasible, and arm~3 appearing infeasible, arm~3 would get
rejected. Next, we bound~$\prob{\G_2}.$

\ignore{In this case, empirically infeasible arms would be rejected
  first in decreasing order of their $\hat\mu_2^1(\cdot)$ and
  empirically feasible arms would be rejected second in decreasing
  order of their $\hat\mu_1^1(\cdot)$. Thus, for arm 1 to be rejected
  when arm 2 is empirically optimal and arm 1 is empirically feasible,
  arm 3 has to be empirically feasible and hence, when we also have
  that arm 3 is empirically infeasible, this case does not contribute
  to arm 1 being rejected at the end of round 1.} 

Under the event~$\G_2,$ we have
\begin{equation}
\hat\Delta(2,1) > \hat\Delta(2,3) =
\hat\delta(2,3) \geq \sqrt{a_2} \left ( \hat{\mu}_2^1(3) - \tau \right
).
\label{eq:lemma4_1}
\end{equation}
\ignore{ Similar to the case of arm 3 being a suboptimal arm, we have
  that $\hat\Delta(2,3) = \hat\delta(2,3)$. We also have that
\begin{align*}
&\prob{   \mathcal{A}_1 \cap \{\hat{\mu}^1_1(3) > \tau \} \cap \mathcal{B}_1 \cap \{\hat{J}(A_k) = 2\}} \\
&= \prob{   \{ \hat\Delta(2,1) \geq \hat\Delta(2,3) \} \cap \{\hat{\mu}^1_1(3) > \tau \} \cap \mathcal{B}_1 \cap \{\hat{J}(A_k) = 2\}} \\
&= \prob{   \{ \hat\Delta(2,1) \geq \hat\delta(2,3) \} \cap \{\hat{\mu}^1_1(3) > \tau \} \cap \mathcal{B}_1 \cap \{\hat{J}(A_k) = 2\}},
\end{align*}
where 
\begin{align*}
\hat\Delta(2,1) &\leq  \sqrt{a_2} ( \tau - \hat\mu_2^1(2) ), \\
\hat\delta(2,1) &= \sqrt{a_1} \left ( \hat{\mu}_1^1(1) - \hat{\mu}_1^1(2) \right ), \\
\hat\delta(2,3) &\geq  \sqrt{a_2} \left ( \hat{\mu}_2^1(3) - \tau \right ).  
\end{align*}
} %end ignore
We now take two cases for the nature of arm 2.

\noindent {\bf Case 2a: Arm~2 is suboptimal.} In this case, we
use $$\sqrt{a_1} \left ( \hat{\mu}_1^1(1) - \hat{\mu}_1^1(2) \right )
\geq \hat\Delta(2,1).$$ Combining this bound with~\eqref{eq:lemma4_1},
\begin{align*}
\prob{\G_2} &\leq \prob{\sqrt{a_1} \left ( \hat{\mu}_1^1(1) - \hat{\mu}_1^1(2) \right ) \geq \sqrt{a_2} \left ( \hat{\mu}_2^1(3) - \tau \right )} \\
&= \mathbb{P} \Bigl ( \{  \sqrt{a_1} \left ( \hat{\mu}_1^1(1) - \mu_1(1) - \hat{\mu}_1^1(2) + \mu_1(2) \right ) \\ &\qquad\qquad \geq   \sqrt{a_2} \left ( \hat{\mu}_2^1(3) -\mu_2(3) \right ) + \delta(1,3) + \delta(1,2) \} \Bigr ) \\
&\leq 6\exp \left (- \frac{n_1}{9} (\delta(1,3) + \delta(1,2))^2 \right ) \\
&\leq 6 \exp \left (- \frac{n_1}{9} \delta(1,3)^2 \right ).
\end{align*}

\noindent {\bf Case 2b: Arm~2 is a deceiver.} When arm 2 is a deceiver
arm, we use $$\sqrt{a_2} ( \tau - \hat\mu_2^1(2) ) \geq
\hat\Delta(2,1).$$ Combining this bound with~\eqref{eq:lemma4_1},
\begin{align*}
\prob{\G_2} &\leq \prob{\sqrt{a_2} \left( \tau - \hat\mu_2^1(2) \right) \geq \sqrt{a_2} \left ( \hat{\mu}_2^1(3) - \tau \right )}\\
&= \mathbb{P} \Bigl  (\{ ( \mu_2(3) - \hat{\mu}_2^1(3) ) - (\hat\mu_2^1(2) -\mu_2(2) )  \\ & \qquad\qquad \geq  ( \mu_2(2) - \tau )+ (\mu_2(3)- \tau) \} \Bigr ) \\
&\leq 4 \exp \left (- \frac{n_1}{4} a_2 (( \mu_2(2) - \tau )+ (\mu_2(3)- \tau))^2 \right ) \\
&\leq 4 \exp \left (- \frac{n_1}{4} \delta(1,3)^2 \right ) .
\end{align*}

Thus, combining the results of the two cases with~\eqref{eqn: 3 arm ub
  main} and~\eqref{eq:lemma4_2} gives us the following bound on the
probability of arm~1 being rejected at the end of the first round when
arm~2 is empirically optimal:
\begin{equation*}
  \prob{ \{\hat{J}(A_k) = 2\} \cap \mathcal{A}_1} \leq 10 \exp \left (- \frac{n_1}{9}  \Delta(1,3)^2 \right ).
  \label{eqn: 3 arm feasible ub deceiver}
\end{equation*}

\noindent {\bf Case 3: Arm 3 is an infeasible suboptimal arm.}

This case follows from Case~1 if $\delta(1,3)$ is dictated by the the
suboptimality gap of arm~3, and from Case~2 if $\delta(1,3)$ is
dictated by the the infeasibility gap of arm~3.

\ignore{
Next, we consider the case where arm 3 is an infeasible suboptimal
arm. Note that both \eqref{eqn: 3 arm ub feasible subopt main} and
\eqref{eqn: 3 arm feasible ub deceiver} are valid bounds for this
case. In the former, $\delta(1,3)=\sqrt{a_1}(\mu_1(3)-\mu_1(1)$, while
in the latter, $\delta(1,3)=\sqrt{a_2}(\mu_2(3)-\tau$. Thus, we have
that
\begin{align*}
&\prob{ \{\hat{J}(A_k) = 2\} \cap \mathcal{A}_1} \\&\leq 10 \exp \left (- \frac{n_1}{4}  \max \{ a_1(\mu_1(3)-\mu_1(1)^2), a_2(\mu_2(3)-\tau\}  )^2\right  ) \\
&=10 \exp \left (- \frac{n_1}{4}  \delta(1,3)^2 \right ). 
\end{align*} 
Thus, combining all three cases, we get that, for a feasible instance,
when arm 2 is empirically optimal, the probability of arm 1 being
rejected at the end of round 1 can be bounded in the following way:
\begin{align*}
\prob{ \{\hat{J}(A_k) = 2\} \cap \mathcal{A}_1} &\leq 10 \exp \left (- \frac{n_1}{9}  \delta(1,3)^2 \right ) \\
& \leq  10 \exp \left (- \frac{n_1}{9}  \Delta(1,3)^2 \right ) . \numberthis \label{eqn: 3 arm ub feasible final}
\end{align*}
}%end ignore
\end{proof}

%%%%%%%%%%%%%%%%%%%%%%%%%%%%%%%%%%%%%%%%
%%%%%%%%%%%%%%%%%%%%%%%%%%%%%%%%%%%%%%%%
%%%%%%%%%%%%%%%%%%%%%%%%%%%%%%%%%%%%%%%%

\subsection{Underlying instance is infeasible}

\begin{proof}[Proof of Lemma \ref{lemma:2arm-upper_bound}]
The infeasible instance consists of two arms, each of which has been
drawn $n_1$ times. Note that arm~1 is rejected either if arm~1 appears
empirically feasible, or if arm~2 appears favourable relative to arm~1
on the constraint attribute. Thus, $$\prob{\mathcal{A}_1} \leq
\prob{\hat\mu_2^1(1) \leq \tau} + \prob{\hat\mu_2^1(2) \leq
  \hat\mu_2^1(1)}.$$ Each of the two terms above can be bounded from
above using~\eqref{eqn: conc_inequality}, as demonstrated before, to
yield the statement of the lemma.

\end{proof}

%%%%%%%%%%%%%%%%%%%%%%%%%%%%%%%%%%%%%%%%
%%%%%%%%%%%%%%%%%%%%%%%%%%%%%%%%%%%%%%%%
%%%%%%%%%%%%%%%%%%%%%%%%%%%%%%%%%%%%%%%%

\begin{proof}[Proof of Lemma~\ref{lemma:3arm-upper_bound}]

The infeasible instance consists of three arms, each of which has been
drawn $n_1$ times. Let $\mathcal{B}_k$ denote the event that arm~1 is
empirically feasible at the end of round $k$. Proceeding similarly as
in \eqref{eqn: 2 arm ub main},
\begin{align*}
&\prob{ \{\hat{J}(A_k) = 2\} \cap \mathcal{A}_1} \\
&\leq \prob{   \mathcal{A}_1 \cap \mathcal{B}_1^{\mathsf{c}} \cap \{\hat{J}(A_k) = 2\} \cap \{\hat\mu_2(2), \hat\mu_2(3) > \tau \} } \\
&\quad +  6 \exp \left ( - n_{1} \Delta(1,3)^2 \right ), \numberthis \label{eqn: 3 arm ub main infeasible}
\end{align*}
the only difference being that here, we bound the probability of arm
1, 2 or 3 being empirically feasible using \eqref{eqn:
  conc_inequality}. We also use the fact that $\mu_2(1) < \mu_2(2)
\leq \mu_2(3)$. The first term in \eqref{eqn: 3 arm ub main
  infeasible} is the probability of arm 1 being rejected at the end of
round 1 when all three arms are empirically infeasible and arm 2 is
empirically optimal. It thus implies $\hat\mu_2^1(1) >\hat\mu_2^1(3),$
yielding
\begin{align*}
&\prob{   \mathcal{A}_1 \cap \mathcal{B}_1^{\mathsf{c}} \cap \{\hat{J}(A_k) = 2\} \cap \{\hat\mu_2(2), \hat\mu_2(3) > \tau \} } \\
&\quad \leq \prob{\hat\mu_2^1(1) >\hat\mu_2^1(3)} \\
&\quad \leq 4 \exp \left (- \frac{n_1}{4} (\mu_2(3) - \mu_2(1))^2 \right ) \\
&\quad = 4 \exp \left (- \frac{n_1}{4} \delta(1,3)^2 \right ).
\end{align*}
Combining the above result and \eqref{eqn: 3 arm ub main infeasible},
we get that
\begin{equation*}
\prob{ \{\hat{J}(A_k) = 2\} \cap \mathcal{A}_1} \leq 10 \exp \left ( - \frac{n_1}{4} \Delta(1,3)^2 \right ). \label{eqn: 3 arm ub infeasible final}
\end{equation*}
\ignore{
Thus, combining \eqref{eqn: 3 arm ub feasible final} and \eqref{eqn: 3
  arm ub infeasible final}, we get that the probability of rejecting
arm 1 in any three armed instance at the end of round 1 when arm 2 is
empirically optimal can be bounded in the following way:
\begin{equation*}
\prob{ \{\hat{J}(A_k) = 2\} \cap \mathcal{A}_1} \leq 10 \exp \left ( - \frac{n_1}{9} \delta(1,3)^2 \right ). 
\end{equation*}
} %end ignore
\end{proof}

\section{The Infeasible First algorithm} \label{app: IF}
Informally, the algorithm removes the most (empirically) infeasible arm that has survived so far. If there are no infeasible arms, it removes the most (empirically) suboptimal arm. The formal version of the algorithm is given in Algorithm \ref{algo: if}. 
  \begin{algorithm}[t]
  \caption{Infeasible First algorithm}
   \label{algo: if}
  \begin{algorithmic}[1]
    \Procedure{IF}{$T,K,\tau$}
    \State Let $A_1=\{1,\ldots,K\}$
    \State $\overline{\log}(K) := \frac{1}{2} + \sum_{i=2}^{K}\frac{1}{i}$ 
    \State $n_0 = 0,$ $n_k = \lceil \frac{1}{\overline{\log}(K) } \frac{T-K}{K+1-k} \rceil$ for $1 \leq k \leq K-1$
    
      \For{$k=1,\ldots,K-1$}
      \State For each $i \in A_k$, pull arm $i$ $(n_k-n_{k-1})$ times
      
      \State Compute $\hat{\mathcal{K}}(A_k)=\{i \in A_k : \hat{\mu}_{2}^{k}(i) > \tau \}$
       \State Compute $\hat{\mathcal{K}}^{\mathsf{c}}(A_k)=A_k \setminus \hat{\mathcal{K}}(A_k)$
       
             \If{ $\hat{\mathcal{K}}^{\mathsf{c}}(A_k) \neq \emptyset\}$ }
      	\State $A_{k+1}=A_k \setminus \{ \underset{i \in \hat{\mathcal{K}}^{\mathsf{c}}(A_k) }{\argmax} \hat{\mu}_2^{k}(i) \}$   (if there is a tie, choose randomly)
      \Else
      	\State $A_{k+1}=A_k \setminus \{ \underset{i \in \hat{\mathcal{K}}(A_k) }{\argmax} \hat{\mu}_1^{k}(i) \}$   (if there is a tie, choose randomly) 
	\EndIf
	
      \EndFor
      \State Let $\hat{J}_{T}$ be the unique element of $A_K$
      \If{$\hat{\mu}^{K-1}_{2}(\hat{J}_{T})> \tau$}
      	\State $\hat{O}([K])=0$
     \Else
     	\State $\hat{O}([K])=\hat{J}_{T}$
      \EndIf
      \State \textbf{return} $\hat{O}([K])$
    \EndProcedure
  \end{algorithmic}
  
\end{algorithm}

\end{document}